\documentclass[11pt,a4paper]{article}
\usepackage[hyperref]{naaclhlt2019}
\usepackage{times}
\usepackage{latexsym}
\usepackage{url}            %
\usepackage{nicefrac}       %
\usepackage{microtype}      %

\usepackage{amsfonts}
\usepackage{amsmath}
\usepackage{amssymb}
\usepackage{mathtools}

\usepackage{graphicx}
\usepackage[font=small]{caption}
\usepackage{subcaption}
\usepackage{adjustbox}
\usepackage{wrapfig}
\graphicspath{{figures/pdf/}}
\DeclareGraphicsExtensions{.pdf}

\usepackage{tikz}
\usetikzlibrary{arrows}
\usetikzlibrary{backgrounds}

\usepackage{algorithm}
\usepackage{algorithmic}
\usepackage{rotating}

\usepackage{booktabs}
\usepackage{multirow}
\usepackage{tabularx}
\usepackage{makecell}

\usepackage{balance}
\usepackage{enumitem}
\usepackage{hyperref}
\usepackage{setspace}

\usepackage{macro}

\usepackage{color}
\definecolor{mdgreen}{rgb}{0.05,0.6,0.05}

\aclfinalcopy %

\title{Consistency by Agreement in Zero-shot Neural Machine Translation}

\author{%
  Maruan Al-Shedivat\thanks{~~Work done at Google.} \\
  Carnegie Mellon University \\
  Pittsburgh, PA 15213 \\
  {\tt alshedivat@cs.cmu.edu} \\
  \And
  Ankur P. Parikh \\
  Google AI Language \\
  New York, NY 10011 \\
  {\tt aparikh@google.com} \\}

\date{}

\begin{document}
\maketitle
\begin{abstract}
Generalization and reliability of multilingual translation often highly depend on the amount of available parallel data for each language pair of interest.
In this paper, we focus on zero-shot generalization---a challenging setup that tests models on translation directions they have not been optimized for at training time.
To solve the problem, we
(i) reformulate multilingual translation as probabilistic inference,
(ii) define the notion of zero-shot consistency and show why standard training often results in models unsuitable for zero-shot tasks, and
(iii) introduce a consistent agreement-based training method that encourages the model to produce equivalent translations of parallel sentences in auxiliary languages.
We test our multilingual NMT models on multiple public zero-shot translation benchmarks (IWSLT17, UN corpus, Europarl) and show that agreement-based learning often results in 2-3 BLEU zero-shot improvement over strong baselines without any loss in performance on supervised translation directions.
\end{abstract}

\section{Introduction}

Machine translation (MT) has made remarkable advances with the advent of deep learning approaches \citep{bojar2016findings,wu2016google,crego2016systran,junczys2016neural}.
The progress was largely driven by the encoder-decoder framework \citep{sutskever2014sequence,cho2014learning} and typically supplemented with an attention mechanism \citep{bahdanau2014neural,luong2015effective}.

Compared to the traditional phrase-based systems \citep{koehn2009statistical}, neural machine translation (NMT) requires large amounts of data in order to reach high performance \citep{koehn2017six}.
Using NMT in a multilingual setting exacerbates the problem by the fact that given $k$ languages translating between all pairs would require $O(k^2)$ parallel training corpora (and $O(k^2)$ models).

In an effort to address the problem, different multilingual NMT approaches have been proposed recently.
\citet{luong2015multi, firat2016multi} proposed to use $O(k)$ encoders/decoders that are then intermixed to translate between language pairs.
\citet{johnson2016google} proposed to use a single model and prepend special symbols to the source text to indicate the target language, which has later been extended to other text preprocessing approaches \citep{ha2017effective} as well as language-conditional parameter generation for encoders and decoders of a single model \citep{platanios2018contextual}.

\begin{figure}[t]
    \centering
    \includegraphics[width=0.95\columnwidth]{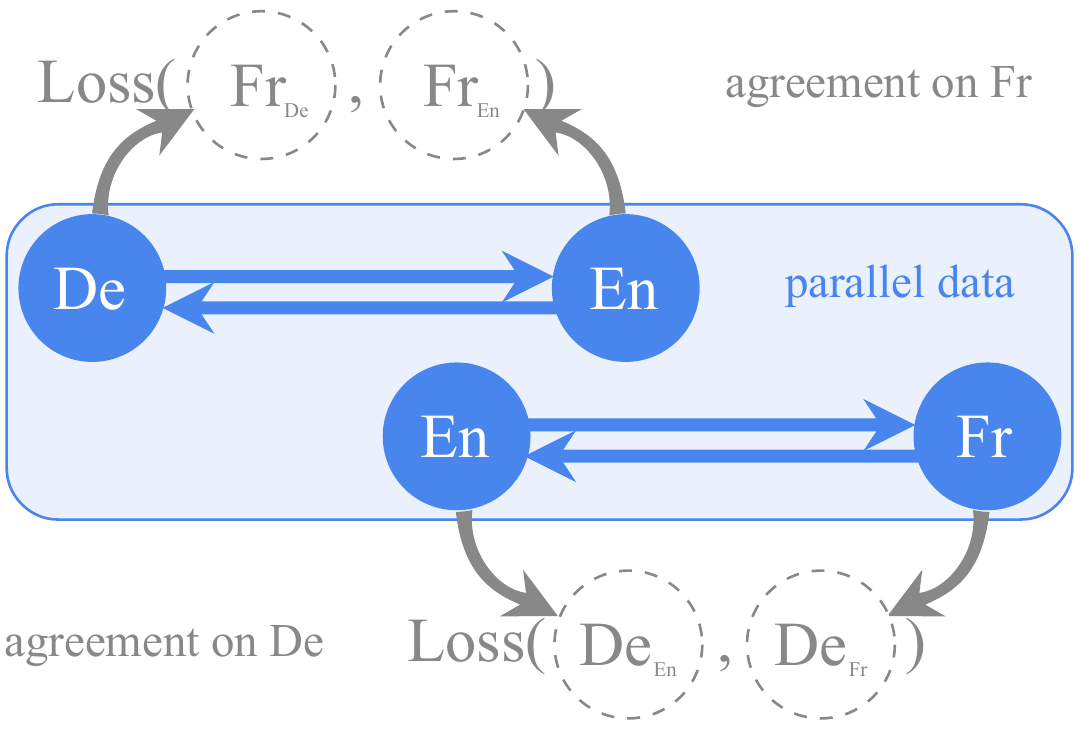}
    \vspace{-1ex}
    \caption{Agreement-based training of a multilingual NMT.
    At training time, given English-French ($\En \leftrightarrow \Fr$) and English-German ($\En \leftrightarrow \De$) parallel sentences, the model not only is trained to translate between the pair but also to agree on translations into a third language.}
    \label{fig:agreement-illustration}
    \vspace{-2.5ex}
\end{figure}

\citet{johnson2016google} also show that a single multilingual system could potentially enable \textit{zero-shot} translation, \ie, it can translate between language pairs not seen in training.
For example, given 3 languages---German (\De), English (\En), and French (\Fr)---and training parallel data only for (\De, \En) and (\En, \Fr), at test time, the system could additionally translate between (\De, \Fr).

Zero-shot translation is an important problem.
Solving the problem could significantly improve data efficiency---a single multilingual model would be able to generalize and translate between any of the $O(k^2)$ language pairs after being trained only on $O(k)$ parallel corpora.
However, performance on zero-shot tasks is often unstable and significantly lags behind the supervised directions.
Moreover, attempts to improve zero-shot performance by fine-tuning~\cite{firat2016zero,sestorain2018zero} may negatively impact other directions.

In this work, we take a different approach and aim to improve the training procedure of \citet{johnson2016google}.
First, we analyze multilingual translation problem from a probabilistic perspective and define the notion of \textit{zero-shot consistency} that gives insights as to why the vanilla training method may not yield models with good zero-shot performance.
Next, we propose a novel training objective and a modified learning algorithm that achieves consistency via agreement-based learning \citep{liang2006alignment,liang2008agreement} and improves zero-shot translation.
Our training procedure encourages the  model to  produce  equivalent  translations  of  parallel training sentences into an auxiliary language (Figure~\ref{fig:agreement-illustration}) and is provably zero-shot consistent.
In addition, we make a simple change to the neural decoder to make the agreement losses fully differentiable.

We conduct experiments on IWSLT17~\citep{mauro2017overview}, UN corpus~\citep{ziemski2016united}, and Europarl~\citep{koehn2017europarl}, carefully removing complete pivots from the training corpora.
Agreement-based learning results in up to +3 BLEU zero-shot improvement over the baseline, compares favorably (up to +2.4 BLEU) to other approaches in the literature \citep{cheng2017joint, sestorain2018zero}, is competitive with pivoting, and does not lose in performance on supervised directions.

\section{Related work}
\label{sec:related-work}

A simple (and yet effective) baseline for zero-shot translation is pivoting that chain-translates, first to a pivot language, then to a target \citep{cohn2007machine, wu2007pivot, utiyama2007comparison}.
Despite being a pipeline, pivoting gets better as the supervised models improve, which makes it a strong baseline in the zero-shot setting.
\citet{cheng2017joint} proposed a joint pivoting learning strategy that leads to further improvements.

\citet{lu2018neural} and \citet{ari2018zero} proposed different techniques to obtain ``neural interlingual'' representations that are passed to the decoder.
\citet{sestorain2018zero} proposed another fine-tuning technique that uses dual learning \citep{he2016dual}, where a language model is used to provide a signal for fine-tuning zero-shot directions.

Another family of approaches is based on distillation \citep{hinton2014dark, kim2016sequence}.
Along these lines, \citet{firat2016zero} proposed to fine tune a multilingual model to a specified zero-shot-direction with pseudo-parallel data and  \citet{chen2017teacher} proposed a teacher-student framework.
While this can yield solid performance improvements, it also adds multi-staging overhead and often does not preserve performance of a single model on the supervised directions.
We note that our approach (and agreement-based learning in general) is somewhat similar to distillation at training time, which has been explored for large-scale single-task prediction problems \citep{anil2018large}.

A setting harder than zero-shot is that of fully unsupervised translation \citep{ravi2011deciphering, artetxe2017unsupervised, lample2017unsupervised, lample2018phrase} in which no parallel data is available for training.
The ideas proposed in these works (\eg, bilingual dictionaries \citep{conneau2017word}, backtranslation \citep{sennrich2015improving} and language models \citep{he2016dual}) are complementary to our approach, which encourages agreement among different translation directions in the zero-shot multilingual setting.

\section{Background}
\label{sec:background}

We start by establishing more formal notation and briefly reviewing some background on encoder-decoder multilingual machine translation from a probabilistic perspective.

\subsection{Notation}
\label{sec:notation}

\paragraph{Languages.}
We assume that we are given a collection of $k$ languages, $L_1, \dots, L_k$, that share a common vocabulary, $V$.
A language, $L_i$, is defined by the marginal probability $\prob{\xv_i}$ it assigns to sentences (\ie, sequences of tokens from the vocabulary), denoted $\xv_i := (x^1, \dots, x^l)$, where $l$ is the length of the sequence.
All languages together define a joint probability distribution, $\prob{\xv_1, \dots, \xv_k}$, over $k$-tuples of \emph{equivalent sentences}.

\paragraph{Corpora.}
While each sentence may have an equivalent representation in all languages, we assume that we have access to only partial sets of equivalent sentences, which form \emph{corpora}.
In this work, we consider \emph{bilingual} corpora, denoted $\Cc_{ij}$, that contain pairs of sentences sampled from $\prob{\xv_i, \xv_j}$ and \emph{monolingual} corpora, denoted $\Cc_i$, that contain sentences sampled from $\prob{\xv_i}$.

\paragraph{Translation.}
Finally, we define a \emph{translation task} from language $L_i$ to $L_j$ as learning to model the conditional distribution $\prob{\xv_j \mid \xv_i}$.
The set of $k$ languages along with translation tasks can be represented as a directed graph $\Gc (\Vc, \Ec)$ with a set of $k$ nodes, $\Vc$, that represent languages and edges, $\Ec$, that indicate translation directions.
We further distinguish between two disjoint subsets of edges: (i) supervised edges, $\Ec_s$, for which we have parallel data, and (ii) zero-shot edges, $\Ec_0$, that correspond to zero-shot translation tasks.
Figure~\ref{fig:translation-graph} presents an example translation graph with supervised edges ($\En \leftrightarrow \Es$, $\En \leftrightarrow \Fr$, $\En \leftrightarrow \Ru$) and zero-shot edges ($\Es \leftrightarrow \Fr$, $\Es \leftrightarrow \Ru$, $\Fr \leftrightarrow \Ru$).
We will use this graph as our running example.

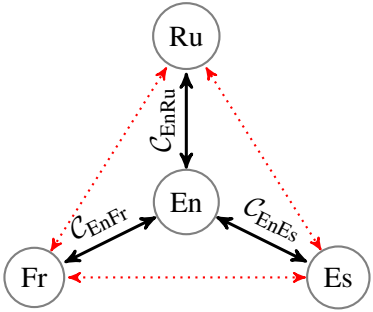
\begin{figure}[t]
    \centering
    \begin{tikzpicture}[->, >=stealth', shorten >=1pt, auto, node distance=1cm, thick, main/.style={circle, draw=black!50, minimum width=15pt}]
        \node[main] (L1) at (0, 0) {En};
        \node[main] (L2) at (2, -1) {Es};
        \node[main] (L3) at (-2, -1) {Fr};
        \node[main] (L4) at (0, 2.2) {Ru};
        
        \draw [<->, line width=0.4mm] (L1) -- (L2) node[midway, above, sloped] {$\Cc_\text{EnEs}$};
        \draw [<->, line width=0.4mm] (L1) -- (L3) node[midway, above, sloped] {$\Cc_\text{EnFr}$};
        \draw [<->, line width=0.4mm] (L1) -- (L4) node[midway, above, sloped] {$\Cc_\text{EnRu}$};
        
        \draw [<->, dotted, color=red] (L2) -- (L4);
        \draw [<->, dotted, color=red] (L2) -- (L3);
        \draw [<->, dotted, color=red] (L3) -- (L4);
    \end{tikzpicture}
    \vspace{-1ex}
    \caption{Translation graph: Languages (nodes), parallel corpora (solid edges), and zero-shot directions (dotted edges).}
    \label{fig:translation-graph}
    \vspace{-1ex}
\end{figure}

\subsection{Encoder-decoder framework}
\label{sec:enc-dec}

First, consider a purely bilingual setting, where we learn to translate from a source language, $L_s$, to a target language, $L_t$.
We can train a translation model by optimizing the conditional log-likelihood of the bilingual data under the model:
\begin{equation}
    \vspace{-1ex}
    \label{eq:bilingual-cond-lik-opt}
    \hat \theta := \argmax\nolimits_\theta \sum\nolimits_{\Cc_{st}} \log \prob[\theta]{\xv_t \mid \xv_s}
    \vspace{-0.5ex}
\end{equation}
where $\hat \theta$ are the estimated parameters of the model.

The encoder-decoder framework introduces a latent sequence, $\uv$, and represents the model as:
\begin{equation}
    \label{eq:enc-dec-cond-lik}
    \prob[\theta]{\xv_t \mid \xv_s} = \probs{\theta}{\mathrm{dec}}{\xv_t \mid \uv = f_{\theta}^{\mathrm{enc}}(\xv_s)}
\end{equation}
where $f_{\theta}^{\mathrm{enc}}(\xv_s)$ is the encoder that maps a source sequence to a sequence of latent representations, $\uv$, and the decoder defines $\probs{\theta}{\mathrm{dec}}{\xv_t \mid \uv}$.\footnote{Slightly abusing the notation, we use $\theta$ to denote all parameters of the model: embeddings, encoder, and decoder.}
Note that $\uv$ is usually deterministic with respect to $\xv_s$ and accurate representation of the conditional distribution highly depends on the decoder.
In neural machine translation, the exact forms of encoder and decoder are specified using RNNs~\citep{sutskever2014sequence}, CNNs~\citep{gehring2016convolutional}, and attention~\citep{bahdanau2014neural,vaswani2017attention} as building blocks.
The decoding distribution, $\probs{\theta}{\mathrm{dec}}{\xv_t \mid \uv}$, is typically modeled autoregressively.

\subsection{Multilingual neural machine translation}
\label{sec:multilingual-nmt}

In the multilingual setting, we would like to learn to translate in \emph{all directions} having access to only few parallel bilingual corpora.
In other words, we would like to learn a collection of models, $\{\prob[\theta]{\xv_j \mid \xv_i}\}_{i, j \in \Ec}$.
We can assume that models are independent and choose to learn them by maximizing the following objective:
\begin{equation}
    \label{eq:basic-multilingual-nmt-objective}
        \Lc^\mathrm{ind}(\theta) = \sum_{i, j \in \Ec_s} \sum_{(\xv_i, \xv_j) \in \Cc_{ij}} \log \prob[\theta]{\xv_j \mid \xv_i}
\end{equation}
In the statistics literature, this estimation approach is called \emph{maximum composite likelihood}~\citep{besag1975statistical,lindsay1988composite} as it composes the objective out of (sometimes weighted) terms that represent conditional sub-likelihoods (in our example, $\prob[\theta]{\xv_j \mid \xv_i}$).
Composite likelihoods are easy to construct and tractable to optimize as they do not require representing the full likelihood, which would involve integrating out variables unobserved in the data (see Appendix~\ref{app:complete-likelihood}).

\citet{johnson2016google} proposed to train a multilingual NMT systems by optimizing a composite likelihood objective \eqref{eq:basic-multilingual-nmt-objective} while representing all conditional distributions, $\prob[\theta]{\xv_j \mid \xv_i}$, with a \emph{shared} encoder and decoder and using language tags, $l_t$, to distinguish between translation directions:
\begin{equation}
    \label{eq:enc-dec-shared}
    \prob{\xv_t \mid \xv_s} = \probs{\theta}{\mathrm{dec}}{\xv_t \mid \uv_{st} = f_{\theta}^\mathrm{enc}(\xv_s, l_t)}
\end{equation}
This approach has numerous advantages including:
(a) simplicity of training and the architecture (by slightly changing the training data, we convert a bilingual NMT into a multilingual one),
(b) sharing parameters of the model between different translation tasks that may lead to better and more robust representations.
\citet{johnson2016google} also show that resulting models seem to exhibit some degree of zero-shot generalization enabled by parameter sharing.
However, since we lack data for zero-shot directions, composite likelihood \eqref{eq:basic-multilingual-nmt-objective} misses the terms that correspond to the zero-shot models, and hence has no statistical guarantees for performance on zero-shot tasks.\footnote{In fact, since the objective \eqref{eq:basic-multilingual-nmt-objective} assumes that the models are independent, plausible zero-shot performance would be more indicative of the limited capacity of the model or artifacts in the data (\eg, presence of multi-parallel sentences) rather than zero-shot generalization.}

\section{Zero-shot generalization \& consistency}
\label{sec:zero-shot-generalization}

Multilingual MT systems can be evaluated in terms of \emph{zero-shot performance}, or quality of translation along the directions they have not been optimized for (\eg, due to lack of data).
We formally define zero-shot generalization via consistency.
\begin{definition}[Expected Zero-shot Consistency]
\label{def:zero-shot-consistency}
Let $\Ec_s$ and $\Ec_0$ be supervised and zero-shot tasks, respectively.
Let $\ell(\cdot)$ be a non-negative loss function and $\Mc$ be a model with maximum expected supervised loss bounded by some $\varepsilon > 0:$
\begin{equation*}
\max_{(i, j) \in \Ec_s} \ep[\xv_i, \xv_j]{\ell(\Mc)} < \varepsilon
\end{equation*}
We call $\Mc$ zero-shot consistent with respect to $\ell(\cdot)$ if for some $\kappa(\varepsilon) > 0$
\begin{equation*}
    \max_{(i, j) \in \Ec_0}\ep[\xv_i, \xv_j]{\ell(\Mc)} < \kappa(\varepsilon),
\end{equation*}
where $\kappa(\varepsilon) \rightarrow 0$ as $\varepsilon \rightarrow 0$.
\end{definition}
In other words, we say that a machine translation system is zero-shot consistent if low error on supervised tasks implies a low error on zero-shot tasks in expectation (\ie, the system generalizes).
We also note that our notion of consistency somewhat resembles error bounds in the domain adaptation literature~\citep{ben2010theory}.

In practice, it is attractive to have MT systems that are guaranteed to exhibit zero-shot generalization since the access to parallel data is always limited and training is computationally expensive.
While the training method of \citet{johnson2016google} does not have guarantees, we show that our proposed approach is provably zero-shot consistent.

\section{Approach}
\label{sec:approach}

We propose a new training objective for multilingual NMT architectures with shared encoders and decoders that avoids the limitations of pure composite likelihoods.
Our method is based on the idea of agreement-based learning initially proposed for learning consistent alignments in phrase-based statistical machine translation (SMT) systems \citep{liang2006alignment,liang2008agreement}.
In terms of the final objective function, the method ends up being reminiscent of distillation \citep{kim2016sequence}, but suitable for joint multilingual training.

\subsection{Agreement-based likelihood}
\label{sec:agreement-based-likelihood}

To introduce agreement-based objective, we use the graph from  Figure~\ref{fig:translation-graph} that defines translation tasks between 4 languages (\En, \Es, \Fr, \Ru).
In particular, consider the composite likelihood objective \eqref{eq:basic-multilingual-nmt-objective} for a pair of $\En-\Fr$ sentences, $(\xv_\En, \xv_\Fr)$:
\begin{align}
    \label{eq:basic-multilingual-nmt-objective-en-fr}
    \MoveEqLeft \Lc^\mathrm{ind}_{\En\Fr}(\theta) \\
    = & \, \log \left[ \prob[\theta]{\xv_\Fr \mid \xv_\En} \prob[\theta]{\xv_\En \mid \xv_\Fr} \right] \nonumber \\
    = & \, \log \left[
        \sum_{\zv_\Es^\prime, \zv_\Ru^\prime} \prob[\theta]{\xv_\Fr, \zv_\Es^\prime, \zv_\Ru^\prime \mid \xv_\En} \times \right. \nonumber \\[-2ex]
    & \qquad\, \left. \sum_{\zv_\Es^{\prime\prime}, \zv_\Ru^{\prime\prime}} \prob[\theta]{\xv_\En, \zv_\Es^{\prime\prime}, \zv_\Ru^{\prime\prime} \mid \xv_\Fr}
    \right] \nonumber
\end{align}
where we introduced latent translations into Spanish ($\Es$) and Russian ($\Ru$) and marginalized them out (by virtually summing over all sequences in the corresponding languages).
Again, note that this objective assumes independence of $\En \rightarrow \Fr$ and $\Fr \rightarrow \En$ models.

Following \citet{liang2008agreement}, we propose to tie together the single prime and the double prime latent variables, $\zv_\Es$ and $\zv_\Ru$, to encourage agreement between $\prob[\theta]{\xv_\En, \zv_\Es, \zv_\Ru \mid \xv_\Fr}$ and $\prob[\theta]{\xv_\Fr, \zv_\Es, \zv_\Ru \mid \xv_\En}$ on the latent translations.
We interchange the sum and the product operations inside the $\log$ in \eqref{eq:basic-multilingual-nmt-objective-en-fr}, denote $\zv := (\zv_\Es, \zv_\Ru)$ to simplify notation, and arrive at the following new objective function:
\begin{align}
    \label{eq:agreement-multilingual-nmt-objective-general}
    \MoveEqLeft \Lc^\mathrm{agree}_{\En\Fr}(\theta) := \\
    & \log \sum_{\zv} \prob[\theta]{\xv_\Fr, \zv \mid \xv_\En} \prob[\theta]{\xv_\En, \zv \mid \xv_\Fr} \nonumber
\end{align}
Next, we factorize each term as:
\begin{equation*}
    \prob{\xv, \zv \mid \yv} = \prob{\xv \mid \zv, \yv} \prob[\theta]{\zv \mid \yv}
\end{equation*}
Assuming $\prob[\theta]{\xv_\Fr \mid \zv, \xv_\En} \approx \prob[\theta]{\xv_\Fr \mid \xv_\En}$,\footnote{This means that it is sufficient to condition on a sentence in one of the languages to determine probability of a translation in any other language.} the objective \eqref{eq:agreement-multilingual-nmt-objective-general} decomposes into two terms:
\begin{align}
    \label{eq:agreement-multilingual-nmt-objective-ij}
    \MoveEqLeft \Lc^\mathrm{agree}_{\En\Fr}(\theta) \\
    \approx 
    & \, \underbrace{\log \prob[\theta]{\xv_\Fr \mid \xv_\En} + \log \prob[\theta]{\xv_\En \mid \xv_\Fr}}_\text{composite likelihood terms} + \nonumber \\
    & \, \underbrace{\log \sum_{\zv} \prob[\theta]{\zv \mid \xv_\En} \prob[\theta]{\zv \mid \xv_\Fr}}_\text{agreement term} \nonumber
\end{align}
We call the expression given in \eqref{eq:agreement-multilingual-nmt-objective-ij} \emph{agreement-based likelihood}.
Intuitively, this objective is the likelihood of observing parallel sentences $(\xv_\En, \xv_\Fr)$ \emph{and} having sub-models $\prob[\theta]{\zv \mid \xv_\En}$ and $\prob[\theta]{\zv \mid \xv_\Fr}$ agree on all translations into $\Es$ and $\Ru$ at the same time.

\paragraph{Lower bound.}
Summation in the agreement term over $\zv$ (\ie, over possible translations into $\Es$ and $\Ru$ in our case) is intractable. 
Switching back from $\zv$ to $(\zv_\Es, \zv_\Ru)$ notation and using Jensen's inequality, we lower bound it with cross-entropy:\footnote{Note that expectations in \eqref{eq:agreement-term-lower-bound} are conditional on $\xv_\En$. Symmetrically, we can have a lower bound with expectations conditional on $\xv_\Fr$. In practice, we symmetrize the objective.}
\begin{align}
    \label{eq:agreement-term-lower-bound}
    \MoveEqLeft \log \sum_{\zv} \prob[\theta]{\zv \mid \xv_\En} \prob[\theta]{\zv \mid \xv_\Fr} \nonumber \\ %
    \geq & \, \ep[\zv_\Es \mid \xv_\En]{\log \prob[\theta]{\zv_\Es \mid \xv_\Fr}} + \\
    & \, \ep[\zv_\Ru \mid \xv_\En]{\log \prob[\theta]{\zv_\Ru \mid \xv_\Fr}}  \nonumber
\end{align}
We can estimate the expectations in the lower bound on the agreement terms by sampling $\zv_\Es \sim \prob[\theta]{\zv_\Es \mid \xv_\En}$ and $\zv_\Ru \sim \prob[\theta]{\zv_\Ru \mid \xv_\En}$.
In practice, instead of sampling we use greedy, continuous decoding (with a fixed maximum sequence length) that also makes $\zv_\Es$ and $\zv_\Ru$ differentiable with respect to parameters of the model.

\subsection{Consistency by agreement}
\label{sec:agreement-consistency}

We argue that models produced by maximizing agreement-based likelihood \eqref{eq:agreement-multilingual-nmt-objective-ij} are zero-shot consistent.
Informally, consider again our running example from Figure~\ref{fig:translation-graph}.
Given a pair of parallel sentences in $(\En, \Fr)$, agreement loss encourages translations from $\En$ to $\{\Es, \Ru\}$ and translations from $\Fr$ to $\{\Es, \Ru\}$ to coincide.
Note that $\En \rightarrow \{\Es, \Fr, \Ru\}$ are supervised directions.
Therefore, agreement ensures that translations along the zero-shot edges in the graph match supervised translations.
Formally, we state it as:

\begin{theorem}[Agreement Zero-shot Consistency]
\label{thm:agreement-consistency}
Let $L_1$, $L_2$, and $L_3$ be a collection of languages with $L_1 \leftrightarrow L_2$ and $L_2 \leftrightarrow L_3$ be supervised while $L_1 \leftrightarrow L_3$ be a zero-shot direction.
Let $\prob[\theta]{\xv_j \mid \xv_i}$ be sub-models represented by a multilingual MT system.
If the expected agreement-based loss, $\ep[\xv_1, \xv_2, \xv_3]{\Lc^\mathrm{agree}_{12}(\theta) + \Lc^\mathrm{agree}_{23}(\theta)}$, is bounded by some $\varepsilon > 0$, then, under some mild technical assumptions on the true distribution of the equivalent translations, the zero-shot cross-entropy loss is bounded as follows:
\begin{equation*}
\ep[\xv_1, \xv_3]{-\log\prob[\theta]{\xv_3 \mid \xv_1}} \le \kappa (\varepsilon)
\end{equation*}
where $\kappa(\varepsilon) \rightarrow 0$ as $\varepsilon \rightarrow 0$.
\end{theorem}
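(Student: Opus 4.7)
The plan is to pivot through the intermediate language $L_2$. Writing
\begin{equation*}
    \ep[\xv_1, \xv_3]{-\log \prob[\theta]{\xv_3 \mid \xv_1}} = A + B,
\end{equation*}
where $A := \ep[\xv_2, \xv_3]{-\log \prob[\theta]{\xv_3 \mid \xv_2}}$ is the supervised $L_2 \to L_3$ cross-entropy and
\begin{equation*}
    B := \ep[\xv_1, \xv_2, \xv_3]{\log \prob[\theta]{\xv_3 \mid \xv_2} - \log \prob[\theta]{\xv_3 \mid \xv_1}}
\end{equation*}
is a ``pivot gap'' taken under the true joint over equivalent triples $(\xv_1, \xv_2, \xv_3)$. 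The decomposition is just inserting $\pm \log \prob[\theta]{\xv_3 \mid \xv_2}$ inside the expectation and marginalizing $\xv_1$ out of the first summand. The term $A$ is directly controlled by the composite-likelihood piece of $\Lc^\mathrm{agree}_{23}$ and is therefore at most $\varepsilon$.

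Next I would show that $B$ is also $O(\sqrt{\varepsilon})$. The ``mild technical assumption'' I would invoke is that equivalent sentences are conditionally independent under the true distribution, $\xv_3 \perp \xv_1 \mid \xv_2$, together with a uniform lower bound on model token probabilities so that $\log \prob[\theta]$ is bounded. Under conditional independence, the inner expectation in $B$ is over $\prob{\xv_3 \mid \xv_2}$; I would then swap that sampling measure for $\prob[\theta]{\xv_3 \mid \xv_2}$ via Pinsker's inequality, at an $O(\sqrt{\varepsilon})$ cost, since the supervised bound $A \le \varepsilon$ already controls $\ep[\xv_2]{D_{\mathrm{KL}}(\prob{\cdot \mid \xv_2} \,\|\, \prob[\theta]{\cdot \mid \xv_2})}$. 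After the swap, $B$ becomes the expected KL divergence $\ep[\xv_1, \xv_2]{D_{\mathrm{KL}}(\prob[\theta]{\cdot \mid \xv_2} \,\|\, \prob[\theta]{\cdot \mid \xv_1})}$ over parallel $L_1$--$L_2$ pairs. The agreement term of $\Lc^\mathrm{agree}_{12}$, instantiated with $L_3$ as the auxiliary language, is exactly what controls this: the symmetrized lower bound \eqref{eq:agreement-term-lower-bound} equals $-\ep[\zv_3 \sim \prob[\theta]{\cdot \mid \xv_2}]{\log \prob[\theta]{\zv_3 \mid \xv_1}}$ up to the nonnegative entropy of $\prob[\theta]{\cdot \mid \xv_2}$, so its boundedness by $\varepsilon$ forces the expected KL to be at most $\varepsilon$ as well. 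Summing the three contributions gives the desired $\kappa(\varepsilon) = O(\varepsilon + \sqrt{\varepsilon}) \to 0$.

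The main obstacle I expect is the middle step: cleanly converting expectations under the true $\prob{\xv_3 \mid \xv_2}$ into expectations under the model's $\prob[\theta]{\xv_3 \mid \xv_2}$ without letting $\log \prob[\theta]$ blow up. The ``mild technical assumption'' must do double duty here---providing both the Markov property of equivalent translations and enough regularity (e.g., a token-level probability floor or a uniform length bound) for a Pinsker-type conversion from KL to total variation to apply to a bounded integrand. Designing a hypothesis that is simultaneously realistic for neural translation models and strong enough to yield a sublinear $\kappa(\varepsilon)$ is the delicate part of the argument; the rest of the proof is essentially a triangle-inequality bookkeeping on cross-entropies.
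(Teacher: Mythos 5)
Your proof takes a genuinely different route from the paper's. The paper never pivots: it bounds the agreement cross-entropy term $-\ep[\xv_1,\xv_2]{\sum_{\xv_3}\prob[\theta]{\xv_3\mid\xv_2}\log\prob[\theta]{\xv_3\mid\xv_1}}$ \emph{directly} by rewriting it as a sum over the true joint $\prob{\xv_1,\xv_2,\xv_3}$ weighted by the likelihood ratio $\prob[\theta]{\xv_3\mid\xv_2}/\prob{\xv_3\mid\xv_1,\xv_2}$, and then lower-bounds that ratio's conditional expectation via a Jensen argument together with the paper's Assumption~\ref{asm:ground-truth-bounds} (a two-sided $[\delta,\xi]$ bound on $\ep[\xv_k\mid\xv_i,\xv_j]{\prob{\xv_i\mid\xv_j,\xv_k}}$). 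This yields a \emph{linear} rate $\kappa(\varepsilon)=\varepsilon/(\log\tfrac{1}{\xi}-\delta\varepsilon)$ and requires no regularity assumption on the \emph{model}. Your decomposition $A+B$ through the pivot language, plus the change-of-measure via Pinsker, is a valid alternative, but it needs \emph{stronger and different} assumptions: (i) the Markov property $\xv_3\perp\xv_1\mid\xv_2$ on the true distribution, which is not what the paper assumes (and is arguably inconsistent with the paper's probabilistic graphical model in Figure~\ref{fig:graphical-model}, where all languages are fully connected), and (ii) a model-side probability floor/length bound to keep the Pinsker integrand bounded, which the paper's proof does not require. You also land on a $\kappa(\varepsilon)=O(\sqrt{\varepsilon})$ rate rather than $O(\varepsilon)$. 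It is worth noting that the paper treats pivoting consistency as a \emph{separate} result (Theorem~\ref{thm:pivoting-consistency} in Appendix~\ref{app:distillation-pivoting-consistency}) that indeed needs an extra assumption (Assumption~\ref{asm:source-pivot-model-quality}) on the source-pivot model---your difficulty in the middle step is exactly the reason the paper isolates that case and prefers the direct likelihood-ratio argument for Theorem~\ref{thm:agreement-consistency}.
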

For discussion of the assumptions and details on the proof of the bound, see Appendix~\ref{app:agreement-consistency-proof}.
Note that Theorem~\ref{thm:agreement-consistency} is straightforward to extend from triplets of languages to arbitrary connected graphs, as given in the following corollary.

\begin{corollary}
Agreement-based learning yields zero shot consistent MT models (with respect to the cross entropy loss) for arbitrary translation graphs as long as supervised directions span the graph.
\end{corollary}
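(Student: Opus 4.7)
The plan is to reduce the arbitrary-graph case to a chained application of the mechanism inside Theorem~\ref{thm:agreement-consistency}. Fix any zero-shot edge $(L_i, L_j) \in \Ec_0$. Since the supervised edges $\Ec_s$ are assumed to span $\Gc$, there exists a path $L_i = L_{v_0}, L_{v_1}, \ldots, L_{v_m} = L_j$ such that every consecutive pair $(v_{k-1}, v_k)$ lies in $\Ec_s$. I would proceed by induction on the path length $m$.

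The base case $m = 2$ is exactly Theorem~\ref{thm:agreement-consistency}, applied to the triangle with $L_{v_1}$ as pivot. For the inductive step, the key observation is that the agreement term in \eqref{eq:agreement-multilingual-nmt-objective-ij}, instantiated on the supervised pair $(v_{k-1}, v_k)$ with $L_j$ serving as the auxiliary language, pushes $\prob[\theta]{\xv_j \mid \xv_{v_{k-1}}}$ close to $\prob[\theta]{\xv_j \mid \xv_{v_k}}$ in cross-entropy (or KL) under the data distribution. Chaining these near-equalities from $k = 1$ up to $k = m-1$, and then invoking the fact that $(v_{m-1}, v_m) = (v_{m-1}, j)$ is a supervised edge with bounded expected cross-entropy, yields a bound on $\ep[\xv_i, \xv_j]{-\log \prob[\theta]{\xv_j \mid \xv_i}}$ in terms of $\varepsilon$ and the path length.

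The main obstacle is controlling how error compounds along the chain. Each hop converts a bound on an agreement or supervised term into a bound on the distance between two conditional distributions, most naturally via Pinsker's inequality combined with a triangle inequality in total variation or KL, mirroring the argument of Appendix~\ref{app:agreement-consistency-proof}. I would need to verify that the composition of $m$ such hops still produces a function $\kappa_m(\varepsilon)$ with $\kappa_m(\varepsilon) \to 0$ as $\varepsilon \to 0$; this is automatic for any fixed $m$ because $\kappa$ is continuous with $\kappa(0) = 0$ by Definition~\ref{def:zero-shot-consistency}. Since the supervised subgraph on $k$ languages has diameter at most $k - 1$, taking $\kappa^\star(\varepsilon) := \max_{(i,j) \in \Ec_0} \kappa_{m(i,j)}(\varepsilon)$ gives a single graph-wide bound that still vanishes with $\varepsilon$.

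A minor technical caveat is that the inductive chain relies on the agreement term being bounded for each supervised edge with $L_j$ playing the role of the auxiliary language. Under the joint training procedure of Section~\ref{sec:approach}, agreement is simultaneously enforced on every parallel sentence against every auxiliary target, so the expected-agreement-loss hypothesis inherited from Theorem~\ref{thm:agreement-consistency} carries through to every hop of the path, yielding the claimed zero-shot consistency uniformly over $\Ec_0$.
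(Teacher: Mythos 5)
The paper offers no explicit proof of this corollary---it is stated as a ``straightforward'' extension of Theorem~\ref{thm:agreement-consistency}---so you are filling in material the authors left implicit. Your overall structure is the right one and is almost certainly what was intended: take a supervised path from $L_i$ to $L_j$, induct on its length, and propagate the $\kappa(\varepsilon)$-type bound along the chain. The base case, the use of graph diameter to obtain a uniform $\kappa^\star$, and the observation that the training objective enforces agreement with \emph{every} auxiliary language on \emph{every} supervised edge are all correct and exactly what is needed.

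One inaccuracy worth flagging is your characterization of the error-compounding mechanism. You propose Pinsker's inequality plus a triangle inequality in TV/KL, and describe this as ``mirroring'' Appendix~\ref{app:agreement-consistency-proof}. The appendix does not use Pinsker at all; it uses Lemma~\ref{lem:expectation-bound} (which rests on Assumption~\ref{asm:ground-truth-bounds}) to lower-bound a likelihood-ratio expectation, then factors the agreement term accordingly. A tighter and more faithful inductive step is therefore to note that Lemma~\ref{lem:expectation-bound} never actually uses that $L_i \to L_j$ lies in $\Ec_s$; it only needs $\ep[\xv_i,\xv_j]{-\log\prob[\theta]{\xv_j\mid\xv_i}}\le\varepsilon'$ for some $\varepsilon'$. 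Thus on the inductive step you can feed in $\varepsilon' = \kappa_{m-1}(\varepsilon)$ for the previously established hop and replay the Theorem~\ref{thm:agreement-consistency} calculation verbatim, yielding the recursion $\kappa_m(\varepsilon) = \varepsilon \big/ \bigl(\log\tfrac{1}{\xi}-\delta\,\kappa_{m-1}(\varepsilon)\bigr)$, valid once $\varepsilon$ is small enough that the denominator is positive, and vanishing as $\varepsilon\to 0$ for each fixed $m$. Your Pinsker route could be made to work, but passing from cross-entropy to KL to TV and back to cross-entropy at each hop adds looseness and extra bookkeeping that the Lemma-\ref{lem:expectation-bound}-based recursion avoids.
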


\paragraph{Alternative ways to ensure consistency.}
Note that there are other ways to ensure zero-shot consistency, \eg, by fine-tuning or post-processing a trained multilingual model.
For instance, pivoting through an intermediate language is also zero-shot consistent, but the proof requires stronger assumptions about the quality of the supervised source-pivot model.\footnote{Intuitively, we have to assume that source-pivot model does not assign high probabilities to unlikely translations as the pivot-target model may react to those unpredictably.}
Similarly, using model distillation \citep{kim2016sequence, chen2017teacher} would be also provably consistent under the same assumptions as given in Theorem~\ref{thm:agreement-consistency}, but for a single, pre-selected zero-shot direction.
Note that our proposed agreement-based learning framework is provably consistent for \emph{all} zero-shot directions and does not require any post-processing.
For discussion of the alternative approaches and consistency proof for pivoting, see Appendix~\ref{app:distillation-pivoting-consistency}.

\begin{algorithm}[t]
     \caption{Agreement-based M-NMT training}
     \label{alg:agreement-algorithm}
     \small
     \setstretch{1.07}
     \begin{algorithmic}[1]
         \INPUT Architecture (\texttt{GNMT}), agreement coefficient ($\gamma$)
         \STATE Initialize: $\theta \leftarrow \theta_0$
         \WHILE{not (converged or step limit reached)}
            \STATE Get a mini-batch of parallel src-tgt pairs, $(\Xv_s, \Xv_t)$
            \STATE Supervised loss: $\Lc^\mathrm{sup}(\theta) \leftarrow \log \prob[\theta]{\Xv_t \mid \Xv_s}$
            \STATE Auxiliary languages: $L_a \sim \mathrm{Unif}(\{1, \dots, k\})$
            \STATE Auxiliary translations: \\
            \qquad $\Zv_{a \leftarrow s} \leftarrow \mathrm{Decode}\left(\Zv_a \mid f^\mathrm{enc}_\theta(\Xv_s, L_a)\right)$ \\
            \qquad $\Zv_{a \leftarrow t} \leftarrow \mathrm{Decode}\left(\Zv_a \mid f^\mathrm{enc}_\theta(\Xv_t, L_a)\right)$
            \STATE Agreement log-probabilities: \\
            \qquad $\ell_{a \leftarrow s}^t \leftarrow \log \prob[\theta]{\Zv_{a \leftarrow s} \mid \Xv_t}$ \\
            \qquad $\ell_{a \leftarrow t}^s \leftarrow \log \prob[\theta]{\Zv_{a \leftarrow t} \mid \Xv_s}$
            \STATE Apply stop-gradients to supervised $\ell_{a \leftarrow s}^t$ and $\ell_{a \leftarrow t}^s$
            \STATE Total loss: $\Lc^\mathrm{total}(\theta) \leftarrow \Lc^\mathrm{sup}(\theta) + \gamma (\ell_{a \leftarrow s}^t + \ell_{a \leftarrow t}^s)$
            \STATE Update: $\theta \leftarrow \mathrm{optimizer\_update}(\Lc^\mathrm{total}, \theta)$
         \ENDWHILE
         \OUTPUT $\theta$
     \end{algorithmic}
     \vspace{-1ex}
 \end{algorithm}

\begin{figure*}[t]
    \centering
    \includegraphics[width=\textwidth]{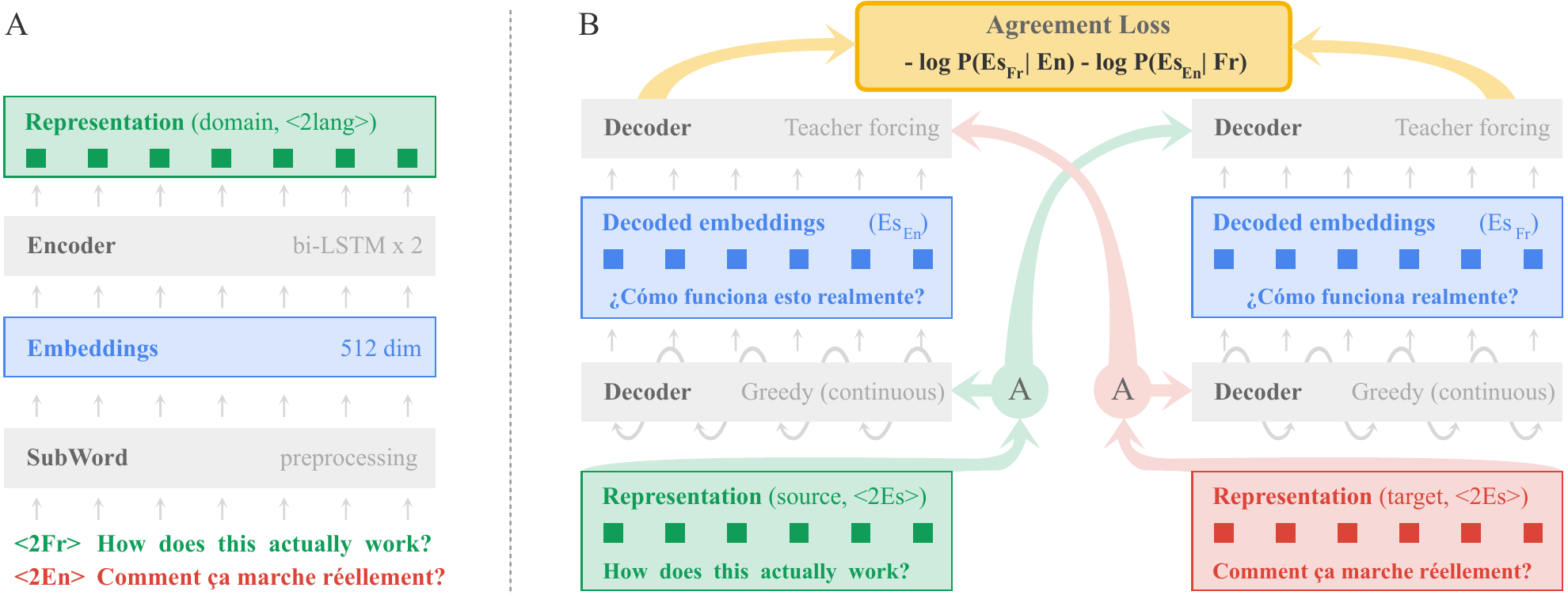}
    \vspace{-2ex}
    \caption{%
    \textbf{A.} Computation graph for the encoder.
    The representations depend on the input sequence and the target language tag.
    \textbf{B.} Computation graph for the agreement loss.
    First, encode source and target sequences with the auxiliary language tags.
    Next, decode $\zv_\Es$ from both $\xv_\En$ and $\xv_\Fr$ using continuous greedy decoder.
    Finally, evaluate log probabilities, $\log \prob[\theta]{\zv_\Es(\xv_\En) \mid \xv_\Fr}$ and $\log \prob[\theta]{\zv_\Es(\xv_\Fr) \mid \xv_\En}$, and compute a sample estimate of the agreement loss.}
    \label{fig:agreement-computation-graph}
    \vspace{-1ex}
\end{figure*}

\subsection{Agreement-based learning algorithm}
\label{sec:agreement-based-algorithm}

Having derived a new objective function \eqref{eq:agreement-multilingual-nmt-objective-ij}, we can now learn consistent multilingual NMT models using stochastic gradient method with a couple of extra tricks (Algorithm~\ref{alg:agreement-algorithm}).
The computation graph for the agreement loss is given in Figure~\ref{fig:agreement-computation-graph}.

\paragraph{Subsampling auxiliary languages.}
Computing agreement over \emph{all} languages for each pair of sentences at training time would be quite computationally expensive (to agree on $k$ translations, we would need to encode-decode the source and target sequences $k$ times each).
However, since the agreement lower bound is a sum over expectations \eqref{eq:agreement-term-lower-bound}, we can approximate it by subsampling:
at each training step (and for each sample in the mini-batch), we pick an auxiliary language uniformly at random and compute stochastic approximation of the agreement lower bound \eqref{eq:agreement-term-lower-bound} for that language only.
This stochastic approximation is simple, unbiased, and reduces per step computational overhead for the agreement term from $O(k)$ to $O(1)$.\footnote{In practice, note that there is still a constant factor overhead due to extra encoding-decoding steps to/from auxiliary languages, which is about $\times4$ when training on a single GPU. Parallelizing the model across multiple GPUs would easily compensate this overhead.}

\paragraph{Overview of the agreement loss computation.}
Given a pair of parallel sentences, $\xv_\En$ and $\xv_\Fr$, and an auxiliary language, say \Es, an estimate of the lower bound on the agreement term \eqref{eq:agreement-term-lower-bound} is computed as follows.
First, we concatenate {\Es} language tags to both $\xv_\En$ and $\xv_\Fr$ and encode the sequences so that both can be translated into {\Es} (the encoding process is depicted in Figure~\ref{fig:agreement-computation-graph}A).
Next, we decode each of the encoded sentences and obtain auxiliary translations, $\zv_\Es(\xv_\En)$ and $\zv_\Es(\xv_\Fr)$, depicted as blue blocks in Figure~\ref{fig:agreement-computation-graph}B.
Note that we now can treat pairs $(\xv_\Fr, \zv_\Es(\xv_\En))$ and $(\xv_\En, \zv_\Es(\xv_\Fr))$ as new parallel data for $\En \rightarrow \Es$ and $\Fr \rightarrow \Es$.

Finally, using these pairs, we can compute two log-probability terms (Figure~\ref{fig:agreement-computation-graph}B):
\begin{equation}
    \label{eq:agreement-terms}
    \begin{aligned}
        \log \prob[\theta]{\zv_\Es (\xv_\Fr) \mid \xv_\En} \\
        \log \prob[\theta]{\zv_\Es (\xv_\En) \mid \xv_\Fr}
    \end{aligned}
\end{equation}
using encoding-decoding with teacher forcing (same way as typically done for the supervised directions).
Crucially, note that $\zv_\Es(\xv_\En)$ corresponds to a supervised direction, $\En \rightarrow \Es$, while $\zv_\Es(\xv_\Fr)$ corresponds to zero-shot, $\Fr \rightarrow \Es$.
We want each of the components to (i) improve the zero-shot direction while (ii) minimally affecting the supervised direction.
To achieve (i), we use continuous decoding, and for (ii) we use stop-gradient-based protection of the supervised directions.
Both techniques are described below.

\paragraph{Greedy continuous decoding.}
In order to make $\zv_\Es (\xv_\En)$ and $\zv_\Es (\xv_\Fr)$ differentiable with respect to $\theta$ (hence, \emph{continuous} decoding), at each decoding step $t$, we treat the output of the RNN, $\hv^t$, as the key and use dot-product attention over the embedding vocabulary, $\Vv$, to construct $\zv_\Es^t$:
\begin{equation}
    \label{eq:continuous-decoder-output}
    \zv_\Es^{t} := \softmax\left\{(\hv^t)^\top\Vv\right\} \Vv
\end{equation}
In other words, auxiliary translations, $\zv_\Es (\xv_\En)$ and $\zv_\Es (\xv_\Fr)$, are fixed length sequences of differentiable embeddings computed in a greedy fashion.

\paragraph{Protecting supervised directions.}
Algorithm~\ref{alg:agreement-algorithm} scales agreement losses by a small coefficient $\gamma$.
We found experimentally that training could be sensitive to this hyperparameter since the agreement loss also affects the supervised sub-models.
For example, agreement of $\En \rightarrow \Es$ (supervised) and $\Fr \rightarrow \Es$ (zero-shot) may push the former towards a worse translation, especially at the beginning of training.
To stabilize training, we apply the \texttt{stop\_gradient} operator to the log probabilities and samples produced by the supervised sub-models before computing the agreement terms \eqref{eq:agreement-terms}, to zero-out the corresponding gradient updates.
\section{Experiments}
\label{sec:experiments}

We evaluate agreement-based training against baselines from the literature on three public datasets that have multi-parallel \emph{evaluation data} that allows assessing zero-shot performance.
We report results in terms of the BLEU score~\citep{papineni2002bleu} that was computed using \texttt{mteval-v13a.perl}.

\subsection{Datasets}

\paragraph{UN corpus.}
Following the setup introduced in \citet{sestorain2018zero}, we use two datasets, \emph{UNcorpus-1} and \emph{UNcorpus-2}, derived from the United Nations Parallel Corpus \citep{ziemski2016united}.
\emph{UNcorpus-1} consists of data in 3 languages, $\En$, $\Es$, $\Fr$, where \emph{UNcorpus-2} has $\Ru$ as the 4th language.
For training, we use parallel corpora between $\En$ and the rest of the languages, each about 1M sentences, sub-sampled from the official training data in a way that ensures no multi-parallel training data.
The \emph{dev} and \emph{test} sets contain 4,000 sentences and are all multi-parallel.

\paragraph{Europarl v7\footnote{\url{http://www.statmt.org/europarl/}}.}
We consider the following languages: $\De$, $\En$, $\Es$, $\Fr$.
For training, we use parallel data between $\En$ and the rest of the languages (about 1M sentences per corpus), preprocessed to avoid multi-parallel sentences, as was also done by \citet{cheng2017joint} and \citet{chen2017teacher} and described below.
The \emph{dev} and \emph{test} sets contain 2,000 multi-parallel sentences.

\paragraph{IWSLT17\footnote{\url{https://sites.google.com/site/iwsltevaluation2017/TED-tasks}}.}
We use data from the official multilingual task:
5 languages ($\De$, $\En$, $\It$, $\Nl$, $\Ro$), 20 translation tasks of which 4 zero-shot ($\De \leftrightarrow \Nl$ and $\It \leftrightarrow \Ro$) and the rest 16 supervised.
Note that this dataset has a significant overlap between parallel corpora in the supervised directions (up to 100K sentence pairs per direction).
This implicitly makes the dataset multi-parallel and defeats the purpose of zero-shot evaluation \citep{dabre2017kyoto}.
To avoid spurious effects, we also derived \textbf{IWSLT17}$^\star$ dataset from the original one by restricting supervised data to only $\En \leftrightarrow \{\De, \Nl, \It, \Ro\}$ and removing overlapping pivoting sentences.
We report results on both the official and preprocessed datasets.

\paragraph{Preprocessing.}
To properly evaluate systems in terms of zero-shot generalization, we preprocess Europarl and IWSLT$^\star$ to avoid multi-lingual parallel sentences of the form \emph{source-pivot-target}, where \emph{source-target} is a zero-shot direction.
To do so, we follow \citet{cheng2017joint,chen2017teacher} and randomly split the overlapping pivot sentences of the original \emph{source-pivot} and \emph{pivot-target} corpora into two parts and merge them separately with the non-overlapping parts for each pair.
Along with each parallel training sentence, we save information about source and target tags, after which all the data is combined and shuffled.
Finally, we use a shared multilingual subword vocabulary \citep{sennrich2015neural} on the training data (with 32K merge ops), separately for each dataset.
Data statistics are provided in Appendix~\ref{app:data-details}.

\subsection{Training and evaluation}

Additional details on the hyperparameters can be found in Appendix~\ref{app:model-details}.

\paragraph{Models.}
We use a smaller version of the GNMT architecture \citep{wu2016google} in all our experiments: 512-dimensional embeddings (separate for source and target sides), 2 bidirectional LSTM layers of 512 units each for encoding, and GNMT-style, 4-layer, 512-unit LSMT decoder with residual connections from the 2nd layer onward.

\paragraph{Training.}
We trained the above model using the \emph{standard method} of \citet{johnson2016google} and using our proposed \emph{agreement-based} training (Algorithm~\ref{alg:agreement-algorithm}).
In both cases, the model was optimized using Adafactor \citep{shazeer2018adafactor} on a machine with 4 P100 GPUs for up to 500K steps, with early stopping on the dev set.

\begin{table}[t]
\centering
\scriptsize
\def\arraystretch{0.9}
\setlength{\tabcolsep}{4.8pt}
\begin{tabular}{@{}lrrr|rr|r@{}}
\toprule
                        & \multicolumn{3}{c|}{\citet{sestorain2018zero}$^\dagger$}  & \multicolumn{2}{c|}{Our baselines}  \\
\cmidrule[0.5pt]{2-6}
                        & PBSMT & NMT-0 & Dual-0 & Basic & Pivot & Agree  \\
\cmidrule[0.5pt]{1-7}
$\En \rightarrow \Es$   & \textcolor{gray}{61.26} & 51.93 & ---    & 56.58 & \textcolor{gray}{56.58} & 56.36 \\
$\En \rightarrow \Fr$   & \textcolor{gray}{50.09} & 40.56 & ---    & 44.27 & \textcolor{gray}{44.27} & 44.80 \\
$\Es \rightarrow \En$   & \textcolor{gray}{59.89} & 51.58 & ---    & 55.70 & \textcolor{gray}{55.70} & 55.24 \\
$\Fr \rightarrow \En$   & \textcolor{gray}{52.22} & 43.33 & ---    & 46.46 & \textcolor{gray}{46.46} & 46.17 \\
\cmidrule[0.5pt]{1-7}
Supervised (avg.)       & \textcolor{gray}{55.87} & 46.85 & ---    & 50.75 & \textcolor{gray}{50.75} & 50.64 \\
\cmidrule[0.5pt]{1-7}
$\Es \rightarrow \Fr$   & \textcolor{gray}{52.44} & 20.29 & 36.68  & 34.75 & \textbf{38.10} & 37.54 \\
$\Fr \rightarrow \Es$   & \textcolor{gray}{49.79} & 19.01 & 39.19  & 37.67 & \textbf{40.84} & 40.02 \\
\cmidrule[0.5pt]{1-7}
Zero-shot (avg.)        & \textcolor{gray}{51.11} & 19.69 & 37.93  & 36.21 & \textbf{39.47} & 38.78 \\
\bottomrule
\end{tabular}
$^\dagger$Source: \url{https://openreview.net/forum?id=ByecAoAqK7}.
\vspace{-1ex}
\caption{Results on UNCorpus-1.}
\label{tab:uncorpus-exp1}
\vspace{3ex}
\centering
\scriptsize
\def\arraystretch{0.9}
\setlength{\tabcolsep}{4.8pt}
\begin{tabular}{@{}lrrr|rr|r@{}}
\toprule
                        & \multicolumn{3}{c|}{\citet{sestorain2018zero}}  & \multicolumn{2}{c|}{Our baselines}  \\
\cmidrule[0.5pt]{2-6}
                        & PBSMT & NMT-0 & Dual-0 & Basic & Pivot & Agree  \\
\midrule
$\En \rightarrow \Es$   & \textcolor{gray}{61.26} & 47.51 & 44.30  & 55.15 & \textcolor{gray}{55.15}  & 54.30 \\
$\En \rightarrow \Fr$   & \textcolor{gray}{50.09} & 36.70 & 34.34  & 43.42 & \textcolor{gray}{43.42}  & 42.57 \\
$\En \rightarrow \Ru$   & \textcolor{gray}{43.25} & 30.45 & 29.47  & 36.26 & \textcolor{gray}{36.26}  & 35.89 \\
$\Es \rightarrow \En$   & \textcolor{gray}{59.89} & 48.56 & 45.55  & 54.35 & \textcolor{gray}{54.35}  & 54.33 \\
$\Fr \rightarrow \En$   & \textcolor{gray}{52.22} & 40.75 & 37.75  & 45.55 & \textcolor{gray}{45.55}  & 45.87 \\
$\Ru \rightarrow \En$   & \textcolor{gray}{52.59} & 39.35 & 37.96  & 45.52 & \textcolor{gray}{45.52}  & 44.67 \\
\midrule
Supervised (avg.)       & \textcolor{gray}{53.22} & 40.55 & 36.74  & 46.71 & \textcolor{gray}{46.71}  & 46.27 \\
\midrule
$\Es \rightarrow \Fr$   & \textcolor{gray}{52.44} & 25.85 & 34.51 & 34.73 & 35.93             & \textbf{36.02}  \\
$\Fr \rightarrow \Es$   & \textcolor{gray}{49.79} & 22.68 & 37.71 & 38.20 & 39.51             & \textbf{39.94} \\
$\Es \rightarrow \Ru$   & \textcolor{gray}{39.69} &  9.36 & 24.55 & 26.29 & 27.15             & \textbf{28.08} \\
$\Ru \rightarrow \Es$   & \textcolor{gray}{49.61} & 26.26 & 33.23 & 33.43 & \textbf{37.17}    & 35.01 \\
$\Fr \rightarrow \Ru$   & \textcolor{gray}{36.48} &  9.35 & 22.76 & 23.88 & 24.99             & \textbf{25.13} \\
$\Ru \rightarrow \Fr$   & \textcolor{gray}{43.37} & 22.43 & 26.49 & 28.52 & \textbf{30.06}    & 29.53 \\
\midrule
Zero-shot (avg.)        & \textcolor{gray}{45.23} & 26.26 & 29.88 & 30.84 & \textbf{32.47}    & 32.29 \\
\bottomrule
\end{tabular}
\caption{Results on UNCorpus-2.}
\label{tab:uncorpus-exp2}
\end{table}
\begin{table}[t]
\centering
\scriptsize
\def\arraystretch{0.8}
\setlength{\tabcolsep}{8.25pt}
\begin{tabular}{@{}lrr|rr|r@{}}
\toprule
                        & \multicolumn{2}{c|}{Previous work}            & \multicolumn{2}{c|}{Our baselines}  \\
\cmidrule[0.5pt]{2-5}
                        & Soft$^\ddagger$       & Distill$^\dagger$     & Basic & Pivot                   & Agree  \\
\cmidrule[0.5pt]{1-6}
$\En \rightarrow \Es$   & ---                   & ---                   & 34.69 & \textcolor{gray}{34.69} & 33.80 \\
$\En \rightarrow \De$   & ---                   & ---                   & 23.06 & \textcolor{gray}{23.06} & 22.44 \\
$\En \rightarrow \Fr$   & 31.40                 & ---                   & 33.87 & \textcolor{gray}{33.87} & 32.55 \\
$\Es \rightarrow \En$   & 31.96                 & ---                   & 34.77 & \textcolor{gray}{34.77} & 34.53 \\
$\De \rightarrow \En$   & 26.55                 & ---                   & 29.06 & \textcolor{gray}{29.06} & 29.07 \\
$\Fr \rightarrow \En$   & ---                   & ---                   & 33.67 & \textcolor{gray}{33.67} & 33.30 \\
\cmidrule[0.5pt]{1-6}
Supervised (avg.)       & ---                   & ---                   & 31.52 & \textcolor{gray}{31.52} & 30.95 \\
\cmidrule[0.5pt]{1-6}
$\Es \rightarrow \De$   & ---                   & ---                   & 18.23 & 20.14                   & \textbf{20.70} \\
$\De \rightarrow \Es$   & ---                   & ---                   & 20.28 & \textbf{26.50}          & 22.45 \\
$\Es \rightarrow \Fr$   & 30.57                 & \textbf{33.86}        & 27.99 & 32.56                   & 30.94 \\
$\Fr \rightarrow \Es$   & ---                   & ---                   & 27.12 & \textbf{32.96}          & 29.91 \\
$\De \rightarrow \Fr$   & 23.79                 & \textbf{27.03}        & 21.36 & 25.67                   & 24.45 \\
$\Fr \rightarrow \De$   & ---                   & ---                   & 18.57 & \textbf{19.86}          & 19.15 \\
\cmidrule[0.5pt]{1-6}
Zero-shot (avg.)        & ---                   & ---                   & 22.25 & 26.28                   & 24.60 \\
\bottomrule
\end{tabular}\\
$^\dagger$Soft pivoting \citep{cheng2017joint}. $^\ddagger$Distillation \citep{chen2017teacher}.
\vspace{-1ex}
\caption{Zero-shot results on Europarl.
Note that \textit{Soft} and \textit{Distill} are not multilingual systems.}
\label{tab:europarl}
\end{table}
\begin{table}[t]
\centering
\scriptsize
\def\arraystretch{0.9}
\setlength{\tabcolsep}{8.2pt}
\begin{tabular}{@{}lrr|rr|r@{}}
\toprule
                        & \multicolumn{2}{c|}{Previous work}            & \multicolumn{2}{c|}{Our baselines}  \\
\cmidrule[0.5pt]{2-5}
                        & SOTA$^\dagger$        & CPG$^\ddagger$        & Basic & Pivot                     & Agree  \\
\cmidrule[0.5pt]{1-6}
Supervised (avg.)       & 24.10                 & 19.75                 & 24.63 & \textcolor{gray}{24.63}   & 23.97 \\
Zero-shot  (avg.)       & 20.55                 & 11.69                 & 19.86 & 19.26                     & \textbf{20.58} \\
\bottomrule
\end{tabular}\\
$^\dagger$Table 2 from \citet{dabre2017kyoto}. $^\ddagger$Table 2 from \citet{platanios2018contextual}.
\vspace{-1ex}
\caption{Results on the official IWSLT17 multilingual task.}
\label{tab:iwslt17}
\vspace{3ex}
\scriptsize
\def\arraystretch{0.9}
\begin{tabular}{@{}lrr|r@{}}
\toprule
                        & Basic & Pivot                   & Agree  \\
\cmidrule[0.5pt]{1-4}
Supervised (avg.)       & 28.72 & \textcolor{gray}{28.72} & \textbf{29.17} \\
Zero-shot  (avg.)       & 12.61 & \textbf{17.68}                   & 15.23 \\
\bottomrule
\end{tabular}
\caption{Results on our proposed IWSLT17$^\star$.}
\label{tab:iwslt17-star}
\end{table}

\paragraph{Evaluation.}
We focus our evaluation mainly on zero-shot performance of the following methods:\\
(a) \underline{\texttt{Basic}}, which stands for directly evaluating a multilingual GNMT model after standard training~\cite{johnson2016google}.\\
(b) \underline{\texttt{Pivot}}, which performs pivoting-based inference using a multilingual GNMT model (after standard training); often regarded as gold-standard.\\
(c) \underline{\texttt{Agree}}, which applies a multilingual GNMT model trained with agreement losses directly to zero-shot directions.

To ensure a fair comparison in terms of model capacity, all the techniques above use the same multilingual GNMT architecture described in the previous section.
All other results provided in the tables are as reported in the literature.

\paragraph{Implementation.}
All our methods were implemented using TensorFlow \citep{abadi2016tensorflow} on top of tensor2tensor library \citep{vaswani2018tensor2tensor}.
Our code will be made publicly available.\footnote{\url{www.cs.cmu.edu/~mshediva/code/}}

\subsection{Results on UN Corpus and Europarl}

\paragraph{UN Corpus.}
Tables~\ref{tab:uncorpus-exp1} and~\ref{tab:uncorpus-exp2}
show results on the UNCorpus datasets.
Our approach consistently outperforms \texttt{Basic} and \texttt{Dual-0}, despite the latter being trained with additional monolingual data~\citep{sestorain2018zero}.
We see that models trained with agreement perform comparably to \texttt{Pivot}, outperforming it in some cases, \eg, when the target is Russian, perhaps because it is quite different linguistically from the English pivot.

Furthermore, unlike~\texttt{Dual-0}, \texttt{Agree} maintains high performance in the supervised directions (within 1 BLEU point compared to \texttt{Basic}), indicating that our agreement-based approach is effective as a part of a single multilingual system.

\paragraph{Europarl.}
Table~\ref{tab:europarl} shows the results on the Europarl corpus.
On this dataset, our approach consistently outperforms \texttt{Basic} by 2-3 BLEU points but lags a bit behind \texttt{Pivot} on average (except on $\Es \rightarrow \De$ where it is better).
\citet{cheng2017joint}\footnote{We only show their best zero-resource result in the table since some of their methods require direct parallel data.} and \citet{chen2017teacher} have reported zero-resource results on a subset of these directions and our approach outperforms the former but not the latter on these pairs.
Note that both \citet{cheng2017joint} and \citet{chen2017teacher} train separate models for each language pair and the approach of~\citet{chen2017teacher} would require training $O(k^2)$ models to encompass all the pairs.
In contrast, we use a single multilingual architecture which has more limited model capacity (although in theory, our approach is also compatible with using separate models for each direction).

\begin{figure}[t]
    \centering
    \includegraphics[width=\columnwidth]{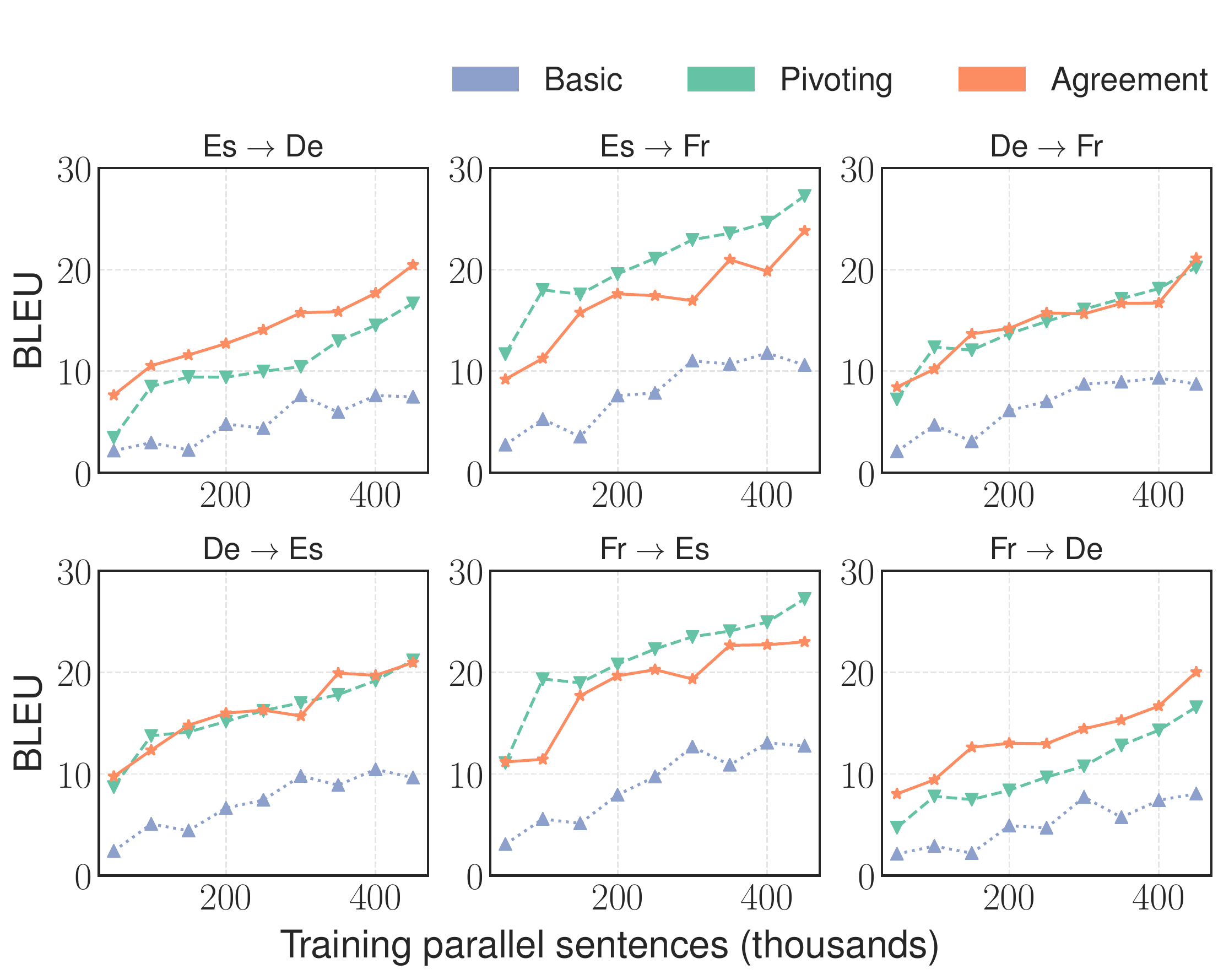}
    \caption{BLEU on the dev set for \texttt{Agree} and the baselines trained on smaller subsets of the Europarl corpus.}
    \label{fig:bleu-vs-data-size}
\end{figure}

\subsection{Analysis of IWSLT17 zero-shot tasks}

Table~\ref{tab:iwslt17} presents results on the original IWSLT17 task.
We note that because of the large amount of data overlap and presence of many supervised translation pairs (16) the vanilla training method \citep{johnson2016google} achieves very high zero shot performance, even outperforming \texttt{Pivot}.
While our approach gives small gains over these baselines, we believe the dataset's pecularities make it not reliable for evaluating zero-shot generalization.

On the other hand, on our proposed preprocessed IWSLT17$^\star$ that eliminates the overlap and reduces the number of supervised directions (8), there is a considerable gap between the supervised and zero-shot performance of \texttt{Basic}.
\texttt{Agree} performs  better than \texttt{Basic} and is slightly worse than \texttt{Pivot}.

\subsection{Small data regime}
To better understand the dynamics of different methods in the small data regime, we also trained all our methods on subsets of the Europarl for 200K steps and evaluated on the dev set.
The training set size varied from 50 to 450K parallel sentences.
From Figure~\ref{fig:bleu-vs-data-size}, \texttt{Basic} tends to perform extremely poorly while \texttt{Agree} is the most robust (also in terms of variance across zero-shot directions).
We see that \texttt{Agree} generally upper-bounds \texttt{Pivot}, except for the $(\Es, \Fr)$ pair, perhaps due to fewer cascading errors along these directions.

\section{Conclusion}
\label{sec:conclusion}

In this work, we studied zero-shot generalization in the context of multilingual neural machine translation.
First, we introduced the concept of zero-shot consistency that implies generalization.
Next, we proposed a provably consistent agreement-based learning approach for zero-shot translation.
Empirical results on three datasets showed that agreement-based learning results in up to +3 BLEU zero-shot improvement over the \citet{johnson2016google} baseline, compares favorably to other approaches in the literature \citep{cheng2017joint, sestorain2018zero}, is competitive with pivoting, and does not lose in performance on supervised directions.

We believe that the theory and methodology behind agreement-based learning could be useful beyond translation, especially in multi-modal settings.
For instance, it could be applied to tasks such as cross-lingual natural language inference~\cite{conneau2018xnli}, style-transfer~\citep{shen2017style, fu2017style, prabhumoye2018style}, or multilingual image or video captioning.
Another interesting future direction would be to explore different hand-engineered or learned data representations, which one could use to encourage models to agree on during training (\eg, make translation models agree on latent semantic parses, summaries, or potentially other data representations available at training time).

\section*{Acknowledgments}

We thank Ian Tenney and Anthony Platanios for many insightful discussions, Emily Pitler for the helpful comments on the early draft of the paper, and anonymous reviewers for careful reading and useful feedback.

\bibliography{references}
\bibliographystyle{acl_natbib}

\appendix
\clearpage
\section{Appendices}
\label{app:appendix}

\subsection{Complete likelihood}
\label{app:complete-likelihood}

Given a set of conditional models, $\{\prob[\theta]{\xv_j \mid \xv_i}\}$, we can write out the full likelihood over equivalent translations, $(\xv_1, \dots, \xv_k)$, as follows:
\begin{equation}
    \label{eq:poe}
    \begin{split}
        \prob[\theta]{\xv_1, \dots, \xv_k} := \frac{1}{Z} \prod_{i, j \in \Ec} \prob[\theta]{\xv_j \mid \xv_i}
    \end{split}
\end{equation}
where $Z := \sum_{\xv_1, \dots, \xv_k} \prod_{i, j \in \Ec} \prob[\theta]{\xv_j \mid \xv_i}$ is the normalizing constant and $\Ec$ denotes \emph{all} edges in the graph (Figure~\ref{fig:graphical-model}).
Given only bilingual parallel corpora, $\Cc_{ij}$ for $i, j \in \Ec_s$, we can observe only certain pairs of variables.
Therefore, the log-likelihood of the data can be written as:
\begin{equation}
    \label{eq:complete-likelihood}
    \begin{split}
        \MoveEqLeft \Lc(\theta) := \\
        & \sum_{i, j \in \Ec_s} \sum_{\xv_i, \xv_j \in \Cc_{ij}} \log \sum_{\zv} \prob[\theta]{\xv_1, \dots, \xv_k} \\
    \end{split}
\end{equation}
Here, the outer sum iterates over available corpora.
The middle sum iterates over parallel sentences in a corpus.
The most inner sum marginalizes out unobservable sequences, denoted $\zv := \{\xv_l\}_{l \neq i, j}$, which are sentences equivalent under this model to $\xv_i$ and $\xv_j$ in languages other than $L_i$ and $L_j$.
Note that due to the inner-most summation, computing the log-likelihood is intractable.

We claim the following.
\begin{claim}
Maximizing the full log-likelihood yields zero-shot consistent models (Definition~\ref{def:zero-shot-consistency}).
\end{claim}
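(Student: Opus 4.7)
The plan is to show that maximizing $\Lc(\theta)$ drives the model's induced joint distribution $\prob[\theta]{\xv_1, \dots, \xv_k}$ toward the true joint $p^\star$ on every supervised pairwise marginal, and then to exploit the fact that \eqref{eq:poe} gives a \emph{single coherent joint} shared across all directions to transfer this match to zero-shot marginals and conditionals. The excess expected cross-entropy on any edge is exactly a KL divergence, so small supervised KL plus identifiability delivers small zero-shot KL, which is what Definition~\ref{def:zero-shot-consistency} demands.

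First, I would pass to the population version of \eqref{eq:complete-likelihood}. Up to a $\theta$-independent constant,
\[
-\mathbb{E}[\Lc(\theta)] \;=\; \sum_{(i,j)\in\Ec_s} \mathrm{KL}\!\bigl(p^\star(\xv_i,\xv_j)\,\|\,\prob[\theta]{\xv_i,\xv_j}\bigr),
\]
where each $\prob[\theta]{\xv_i,\xv_j}$ is the pairwise marginal of the coherent joint \eqref{eq:poe}. Hence if the expected supervised loss is bounded by $\varepsilon$, each supervised-marginal KL is $O(\varepsilon)$.

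Next, I would invoke an identifiability/stability argument: when $\Ec_s$ spans the language graph, $p^\star(\xv_1,\dots,\xv_k)$ is determined by its supervised pairwise marginals under mild positivity assumptions, and small KL perturbations of those marginals propagate to a small KL perturbation of the full joint via a chain-style argument along a spanning path. This yields $\mathrm{KL}(p^\star \,\|\, p_\theta) \le \kappa(\varepsilon)$ with $\kappa(\varepsilon)\to 0$ as $\varepsilon\to 0$. The data-processing inequality then gives $\mathrm{KL}\!\bigl(p^\star(\xv_i,\xv_j)\,\|\,\prob[\theta]{\xv_i,\xv_j}\bigr) \le \kappa(\varepsilon)$ for every zero-shot pair $(i,j)\in\Ec_0$, and the chain rule
\[
\ep[\xv_i,\xv_j]{-\log\prob[\theta]{\xv_j\mid\xv_i}} \;=\; H(p^\star(\xv_j\mid\xv_i)) + \mathrm{KL}\!\bigl(p^\star(\xv_j\mid\xv_i)\,\|\,\prob[\theta]{\xv_j\mid\xv_i}\bigr)
\]
bounds the zero-shot cross-entropy by its irreducible entropy plus $O(\kappa(\varepsilon))$, establishing consistency.

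The main obstacle I anticipate is the identifiability-to-stability step: promoting control of pairwise-marginal KLs into control of the full joint KL requires non-trivial regularity (strict positivity, bounded density ratios along a spanning set of supervised edges) and careful error propagation so that $\kappa$ does not degrade uncontrollably with the graph diameter or with sentence length. A secondary subtlety is the intractable normalizer $Z$ in \eqref{eq:poe}: one must argue that the conditionals $\prob[\theta]{\xv_j\mid\xv_i}$ derived from the product-of-experts remain consistent with the pairwise marginals appearing in the likelihood, which is precisely the virtue of working with a coherent joint rather than with independently parameterized conditionals as in the composite likelihood \eqref{eq:basic-multilingual-nmt-objective}.
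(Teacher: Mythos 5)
Your route is genuinely different from the paper's, and the difference matters: the paper does \emph{not} try to control KL divergences directly. Instead, it expands $\log\prob[\theta]{\xv_1,\xv_2}=\log\sum_{\xv_3,\xv_4}\prob[\theta]{\xv_1,\dots,\xv_4}$ using the product-of-experts form \eqref{eq:poe}, isolates the factors $\prob[\theta]{\xv_\ell\mid\xv_1}\prob[\theta]{\xv_\ell\mid\xv_2}$ for $\ell\in\{3,4\}$, bounds the remaining factors by $1$, and concludes that $\log\prob[\theta]{\xv_1,\xv_2}+\log Z \le \Lc^{\mathrm{agree}}(\theta)$. The claim then follows by reduction to Theorem~\ref{thm:agreement-consistency}: maximizing the full likelihood pushes up the agreement objective, and agreement implies consistency. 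All the probabilistic heavy lifting is outsourced to Theorem~\ref{thm:agreement-consistency} (and Assumption~\ref{asm:ground-truth-bounds}), which the paper has already proved. The argument is short precisely because it does not re-derive consistency from scratch.

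Your plan has a genuine gap at exactly the step you flag as ``the main obstacle.'' The assertion that ``when $\Ec_s$ spans the language graph, $p^\star(\xv_1,\dots,\xv_k)$ is determined by its supervised pairwise marginals under mild positivity assumptions'' is false as stated: positivity alone does not make a joint distribution identifiable from a spanning tree of pairwise marginals, and KL-closeness on supervised marginals of $p_\theta$ to $p^\star$ does not, by itself, transfer to any zero-shot marginal. What makes the transfer possible in this paper is a structural restriction on $p^\star$ itself, namely Assumption~\ref{asm:ground-truth-bounds}, which says that $\ep[\xv_k\mid\xv_i,\xv_j]{\prob{\xv_i\mid\xv_j,\xv_k}}$ is bounded between $\delta$ and $\xi$ --- informally, that conditioning on a single neighbor nearly determines the translation and no single translation overwhelmingly dominates. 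This is the near-Markov/low-ambiguity condition that lets errors be chained along a spanning path (Lemma~\ref{lem:expectation-bound}). Your proposal never states such an assumption, and without it the ``chain-style argument along a spanning path'' cannot get off the ground; with it, you would essentially be re-proving Theorem~\ref{thm:agreement-consistency}. A secondary issue: the ``up to a $\theta$-independent constant'' normalization you invoke to rewrite $-\mathbb{E}[\Lc(\theta)]$ as a sum of KLs is not quite right here, because $\log Z$ in \eqref{eq:poe} depends on $\theta$, so the cross-entropy-to-KL rewrite must carry the $\log Z(\theta)$ term rather than absorb it into a constant. The paper sidesteps this by keeping $\log Z$ explicit in the inequality rather than claiming a clean KL identity.
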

\begin{proof}
To better understand why this is the case, let us consider example in Figure~\ref{fig:graphical-model} and compute the log-likelihood of $(\xv_1, \xv_2)$:
\begin{equation*}
    \begin{split}
        \MoveEqLeft \log \prob[\theta]{\xv_1, \xv_2} \\
        = & \, \log \sum\nolimits_{\xv_3, \xv_4} \prob[\theta]{\xv_1, \xv_2, \xv_3, \xv_4} \\
        \propto & \, \log \prob[\theta]{\xv_1 \mid \xv_2} + \log \prob[\theta]{\xv_2 \mid \xv_1} + \\
        & \log \sum\nolimits_{\xv_3, \xv_4} \prob[\theta]{\xv_1 \mid \xv_3} \textcolor{goog-green}{\prob[\theta]{\xv_3 \mid \xv_1}} \times \\
        & \qquad\qquad\quad\, \prob[\theta]{\xv_2 \mid \xv_3} \textcolor{goog-green}{\prob[\theta]{\xv_3 \mid \xv_2}} \times \\
        & \qquad\qquad\quad\, \prob[\theta]{\xv_1 \mid \xv_4} \textcolor{goog-blue}{\prob[\theta]{\xv_4 \mid \xv_1}} \times \\
        & \qquad\qquad\quad\, \prob[\theta]{\xv_2 \mid \xv_4} \textcolor{goog-blue}{\prob[\theta]{\xv_4 \mid \xv_2}} \times \\
        & \qquad\qquad\quad\, \prob[\theta]{\xv_3 \mid \xv_4} \prob[\theta]{\xv_4 \mid \xv_3}
    \end{split}
\end{equation*}
Note that the terms that encourage agreement on the translation into $L_3$ are colored in \textcolor{goog-green}{green} (similarly, terms that encourage agreement on the translation into $L_4$ are colored in \textcolor{goog-blue}{blue}).
Since all other terms are probabilities and bounded by 1, we have:

\begin{equation*}
    \begin{split}
        \MoveEqLeft \log \prob[\theta]{\xv_1, \xv_2} + \log Z \\
        \leq & \, \log \prob[\theta]{\xv_1 \mid \xv_2} + \log \prob[\theta]{\xv_2 \mid \xv_1} + \\
        & \log \sum\nolimits_{\xv_3, \xv_4} \textcolor{goog-green}{\prob[\theta]{\xv_3 \mid \xv_1}} \textcolor{goog-green}{\prob[\theta]{\xv_3 \mid \xv_2}} \times \\
        & \qquad\qquad\quad\,
        \textcolor{goog-blue}{\prob[\theta]{\xv_4 \mid \xv_1}} \textcolor{goog-blue}{\prob[\theta]{\xv_4 \mid \xv_2}} \\
        \equiv & \, \Lc^\mathrm{agree}(\theta)
    \end{split}
\end{equation*}
In other words, the full log likelihood lower-bounds the agreement objective (up to a constant $\log Z$).
Since optimizing for agreement leads to consistency (Theorem~\ref{thm:agreement-consistency}), and maximizing the full likelihood would necessarily improve the agreement, the claim follows.
\end{proof}

\begin{remark}
Note that the other terms in the full likelihood also have a non-trivial purpose: (a) the terms $\prob[\theta]{\xv_1 \mid \xv_3}$, $\prob[\theta]{\xv_1 \mid \xv_4}$, $\prob[\theta]{\xv_2 \mid \xv_3}$, $\prob[\theta]{\xv_2 \mid \xv_4}$, encourage the model to correctly reconstruct $\xv_1$ and $\xv_2$ when back-translating from unobserved languages, $L_3$ and $L_4$,
and (b) terms $\prob[\theta]{\xv_3 \mid \xv_4}$, $\prob[\theta]{\xv_4 \mid \xv_3}$ enforce consistency between the latent representations.
In other words, full likelihood accounts for a combination of agreement, back-translation, and latent consistency.
\end{remark}

\begin{figure}[t]
    \centering
    \begin{tikzpicture}[->, >=stealth', shorten >=1pt, auto, node distance=1cm, thick, main/.style={circle, draw=black!50, minimum width=15pt}]
        \node[main, fill=gray!20] (L1) at (0, 0) {$L_1$};
        \node[main, fill=gray!20] (L2) at (-2, -1) {$L_2$};
        \node[main] (L3) at (2, -1) {$L_3$};
        \node[main] (L4) at (0, 2.2) {$L_4$};
        
        \draw [-, line width=0.4mm] (L1) -- (L2) node[midway, above, sloped] {$\Cc_\text{12}$};
        \draw [-, line width=0.4mm, color=gray] (L1) -- (L3) node[midway, above, sloped] {$\Cc_\text{13}$};
        \draw [-, line width=0.4mm, color=gray] (L1) -- (L4) node[midway, above, sloped] {$\Cc_\text{14}$};
        
        \draw [-, dotted, color=red] (L2) -- (L4);
        \draw [-, dotted, color=red] (L2) -- (L3);
        \draw [-, dotted, color=red] (L3) -- (L4);
    \end{tikzpicture}
    \caption{Probabilistic graphical model for a multilingual system with four languages $(L_1, L_2, L_3, L_4)$.
    Variables can only be observed only in pairs (shaded in the graph).}
    \label{fig:graphical-model}
\end{figure}
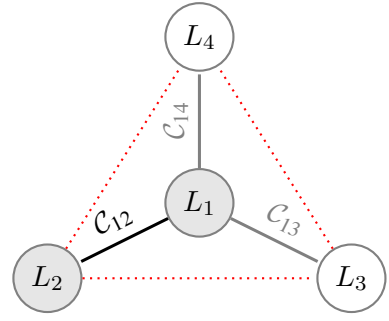

\subsection{Proof of agreement consistency}
\label{app:agreement-consistency-proof}

The statement of Theorem~\ref{thm:agreement-consistency} mentions an assumption on the true distribution of the equivalent translations.
The assumption is as follows.
\begin{assumption}
\label{asm:ground-truth-bounds}
Let $\prob{\xv_i \mid \xv_j, \xv_k}$ be the ground truth conditional distribution that specifies the probability of $\xv_i$ to be a translation of $\xv_j$ and $\xv_k$ into language $L_i$, given that $(\xv_j, \xv_k)$ are correct translations of each other in languages $L_j$ and $L_k$, respectively.
We assume:
\begin{equation*}
0\le \delta \le \ep[\xv_k \mid \xv_i, \xv_j]{\prob{\xv_i \mid \xv_j, \xv_k}} \le \xi \le 1
\end{equation*}
\end{assumption}
This assumption means that, even though there might be multiple equivalent translations, there must be not too many of them (implied by the $\delta$ lower bound) and none of them must be much more preferable than the rest (implied by the $\xi$ upper bound).
Given this assumption, we can prove the following simple lemma.
\begin{lemma}
\label{lem:expectation-bound}
Let $L_i \rightarrow L_j$ be one of the supervised directions, $\ep[\xv_i, \xv_j]{-\log \prob[\theta]{\xv_j \mid \xv_i}} \le \varepsilon$.
Then the following holds:
\begin{equation*}
\ep[\xv_i \mid \xv_j, \xv_k]{\frac{\prob[\theta]{\xv_j \mid \xv_i}}{\prob{\xv_j \mid \xv_i, \xv_k}}} \geq \log \frac{1}{\xi} - \varepsilon\delta
\end{equation*}
\end{lemma}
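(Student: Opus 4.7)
The plan is to combine the supervised cross-entropy bound with Assumption~\ref{asm:ground-truth-bounds} via a mix of Jensen's inequality and a change-of-measure argument, keeping careful track of which variable each expectation is taken over. The ratio $r(\xv_i) := \prob[\theta]{\xv_j \mid \xv_i} / \prob{\xv_j \mid \xv_i, \xv_k}$ already has the structure of a likelihood ratio, so the natural first move is to bring $\log$ inside the expectation.

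First, I would rewrite the denominator by Bayes' rule:
\begin{equation*}
\prob{\xv_j \mid \xv_i, \xv_k} \;=\; \frac{\prob{\xv_i \mid \xv_j, \xv_k}\,\prob{\xv_j \mid \xv_k}}{\prob{\xv_i \mid \xv_k}}.
\end{equation*}
When we take the outer expectation over $\xv_i \mid \xv_j, \xv_k$, the factor $\prob{\xv_i \mid \xv_j, \xv_k}$ appears inside the integrand against the integrating measure of the same name, which is the structural reason the upper bound $\xi$ on $\ep[\xv_k \mid \xv_i, \xv_j]{\prob{\xv_i \mid \xv_j, \xv_k}}$ from Assumption~\ref{asm:ground-truth-bounds} enters the final bound as $\log(1/\xi)$.

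Second, to bring in the supervised hypothesis $\ep[\xv_i, \xv_j]{-\log \prob[\theta]{\xv_j \mid \xv_i}} \le \varepsilon$, I would apply Jensen's inequality ($\log$ is concave) to pass from the expectation of the ratio to the expectation of its logarithm, then split into a supervised piece and a ground-truth piece. The subtlety is that the supervised bound controls $-\log \prob[\theta]{\xv_j \mid \xv_i}$ under the joint $(\xv_i,\xv_j)$, whereas the target quantity is an expectation over the conditional $\xv_i \mid \xv_j, \xv_k$. Here the lower bound $\delta$ from the assumption serves as a Radon--Nikodym-type control that lets me absorb the change of measure, producing exactly the $\varepsilon\delta$ penalty on the right-hand side.

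Third, I would assemble the two estimates into a single inequality of the claimed form and sanity-check the degenerate limits: as $\varepsilon \to 0$ the bound tends to $\log(1/\xi) \ge 0$, and as $\xi \to 1$ the bound becomes vacuous (or negative), which is consistent with the ratio being uninformative when the ground-truth translation is deterministic. The main obstacle I anticipate is precisely the change-of-measure step: the supervised loss is in joint-expectation form while the target quantity is a conditional expectation in a different variable, and the constants $\delta, \xi$ must be deployed carefully so that the direction of the inequality is preserved after taking $\log$ and invoking Jensen. Handling the normalization between conditional and joint densities without losing a factor is where I would expect to spend most of the effort, and the rest of the proof should reduce to bookkeeping.
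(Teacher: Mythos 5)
Your high-level plan shares the paper's two key moves: apply Jensen's inequality to pass the $\log$ through the conditional expectation, then split the resulting expectation of $\log\prob[\theta]{\xv_j\mid\xv_i}-\log\prob{\xv_j\mid\xv_i,\xv_k}$ into a model piece controlled by the supervised hypothesis and a ground-truth piece controlled by Assumption~\ref{asm:ground-truth-bounds}. That skeleton matches the paper's argument.

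However, the opening Bayes'-rule rewrite is a wrong turn. The paper never factors $\prob{\xv_j\mid\xv_i,\xv_k}$; it bounds $\ep[\xv_i\mid\xv_j,\xv_k]{-\log\prob{\xv_j\mid\xv_i,\xv_k}}\geq\log\tfrac{1}{\xi}$ directly by a second Jensen step combined with the $\xi$ upper bound of Assumption~\ref{asm:ground-truth-bounds}. Once you have moved $\log$ inside the expectation (your second step), your decomposition spawns three extra terms --- a conditional entropy $-\ep[\xv_i\mid\xv_j,\xv_k]{\log\prob{\xv_i\mid\xv_j,\xv_k}}$, plus $\ep[\xv_i\mid\xv_j,\xv_k]{\log\prob{\xv_i\mid\xv_k}}$ and $-\log\prob{\xv_j\mid\xv_k}$ --- none of which are controlled by the assumption as stated, so the argument does not close. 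Your stated rationale for the rewrite, that $\prob{\xv_i\mid\xv_j,\xv_k}$ ``cancels against the integrating measure,'' is only true \emph{before} Jensen is applied; once the logarithm is inside, $\log\prob{\xv_i\mid\xv_j,\xv_k}$ does not cancel a density, so that cancellation is not the mechanism by which $\log(1/\xi)$ enters. Dropping the Bayes step and bounding the single ground-truth term $-\ep[\xv_i\mid\xv_j,\xv_k]{\log\prob{\xv_j\mid\xv_i,\xv_k}}$ directly recovers the paper's argument. (The change-of-measure subtlety you flag for the supervised piece is genuine --- the hypothesis is a joint expectation while the lemma needs a conditional one --- but both your sketch and the paper leave the $\delta$ bookkeeping implicit, so I am not counting that against you beyond noting that it is asserted rather than carried out.)
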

\begin{proof}
First, using Jensen's inequality, we have:
\begin{equation*}
\begin{split}
& \log \ep[\xv_i \mid \xv_j, \xv_k]{\frac{\prob[\theta]{\xv_j \mid \xv_i}}{\prob{\xv_j \mid \xv_i, \xv_k}}} \geq \\
& \ep[\xv_i \mid \xv_j, \xv_k]{\log \prob[\theta]{\xv_j \mid \xv_i} - \log \prob{\xv_j \mid \xv_i, \xv_k}}
\end{split}
\end{equation*}
The bound on the supervised direction implies that
\begin{equation*}
\ep[\xv_i \mid \xv_j, \xv_k]{- \log \prob[\theta]{\xv_j \mid \xv_i}} \geq -\varepsilon\delta
\end{equation*}
To bound the second term, we use Assumption~\ref{asm:ground-truth-bounds}:
\begin{equation*}
\ep[\xv_i \mid \xv_j, \xv_k]{- \log \prob{\xv_j \mid \xv_i, \xv_k}} \geq \log \frac{1}{\xi}
\end{equation*}
Putting these together yields the bound.
\end{proof}

\noindent Now, using Lemma~\ref{lem:expectation-bound}, we can prove Theorem~\ref{thm:agreement-consistency}.

\begin{proof}
By assumption, the agreement-based loss is bounded by $\varepsilon$.
Therefore, expected cross-entropy on all supervised terms, $L_1 \leftrightarrow L_2$, is bounded by $\varepsilon$.
Moreover, the agreement term (which is part of the objective) is also bounded:
\begin{equation*}
- \ep[\xv_i, \xv_j]{\sum_{\xv_k} \prob[\theta]{\xv_k \mid \xv_j} \log \prob[\theta]{\xv_k \mid \xv_i}} \le \varepsilon
\end{equation*}
Expanding this expectation, we have:
\begin{equation*}
\begin{split}
& \sum_{\xv_i, \xv_j} \prob{\xv_i, \xv_j} \sum_{\xv_k} \prob[\theta]{\xv_k \mid \xv_j} \log \prob[\theta]{\xv_k \mid \xv_i}] \\
& = \sum_{\xv_i, \xv_j, \xv_k} \prob{\xv_i, \xv_j, \xv_k} \times \\
& \qquad\quad\,\, \frac{\prob[\theta]{\xv_k \mid \xv_j}}{\prob{\xv_k \mid \xv_i, \xv_j}} \log \prob[\theta]{\xv_k \mid \xv_i} \\
& = \sum_{\xv_i, \xv_k} \ep[\xv_j \mid \xv_i, \xv_k]{\frac{\prob[\theta]{\xv_k \mid \xv_j}}{\prob{\xv_k \mid \xv_i, \xv_j}}} \times \\
& \qquad\quad\,\, \prob{\xv_i, \xv_k} \log \prob[\theta]{\xv_k \mid \xv_i}
\end{split}
\end{equation*}
Combining that with Lemma~\ref{lem:expectation-bound}, we have:
\begin{equation*}
\ep[\xv_i, \xv_k]{-\log \prob[\theta]{\xv_k \mid \xv_i}} \le \frac{\varepsilon}{\log\frac{1}{\xi} - \delta\varepsilon} \equiv \kappa(\varepsilon)
\end{equation*}
Since by Assumption~\ref{asm:ground-truth-bounds}, $\delta$ and $\xi$ are some constants, $\kappa(\varepsilon) \rightarrow 0$ as $\varepsilon \rightarrow 0$.
\end{proof}

\subsection{Consistency of distillation and pivoting}
\label{app:distillation-pivoting-consistency}

As we mentioned in the main text of the paper, distillation \citep{chen2017teacher} and pivoting yield zero-shot consistent models.
Let us understand why this is the case.

In our notation, given $L_1 \rightarrow L_2$ and $L_2 \rightarrow L_3$ as supervised directions, distillation optimizes a KL-divergence between $\prob[\theta]{\xv_3 \mid \xv_2}$ and $\prob[\theta]{\xv_3 \mid \xv_1}$, where the latter is a zero-shot model and the former is supervised.
Noting that KL-divergence lower-bounds cross-entropy, it is a loser bound on the agreeement loss.
Hence, by ensuring that KL is low, we also ensure that the models agree, which implies consistency (a more formal proof would exactly follow the same steps as the proof of Theorem~\ref{thm:agreement-consistency}).

To prove consistency of pivoting, we need an additional assumption on the quality of the source-pivot model.

\begin{assumption}
\label{asm:source-pivot-model-quality}
Let $\prob[\theta]{\xv_j \mid \xv_i}$ be the source-pivot model.
We assume the following bound holds for each pair of equivalent translations, $(\xv_j, \xv_k)$:
\begin{equation*}
\ep[\xv_i \mid \xv_j, \xv_k]{\frac{\prob[\theta]{\xv_j \mid \xv_i}}{\prob{\xv_j \mid \xv_i, \xv_k}}} \leq C
\end{equation*}
where $C > 0$ is some constant.
\end{assumption}

\begin{theorem}[Pivoting consistency]
\label{thm:pivoting-consistency}
Given the conditions of Theorem~\ref{thm:agreement-consistency} and Assumption~\ref{asm:source-pivot-model-quality}, pivoting is zero-shot consistent.
\end{theorem}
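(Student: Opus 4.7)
The plan is to bound the expected pivoting cross-entropy by reducing it, via Jensen's inequality and a change-of-measure argument, to the expected supervised pivot-target loss, with the residual discrepancy absorbed by the constant $C$ from Assumption~\ref{asm:source-pivot-model-quality}. The structure mirrors the proof of Theorem~\ref{thm:agreement-consistency}, but trades the role played there by Lemma~\ref{lem:expectation-bound} for the explicit likelihood-ratio bound granted by Assumption~\ref{asm:source-pivot-model-quality}.

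First I would write the pivoting model as a marginalization, $\prob[\theta]{\xv_3 \mid \xv_1} = \sum_{\xv_2} \prob[\theta]{\xv_2 \mid \xv_1} \prob[\theta]{\xv_3 \mid \xv_2}$, and apply Jensen's inequality to $-\log \sum$ using the source-pivot distribution $\prob[\theta]{\xv_2 \mid \xv_1}$ itself as the weighting measure. This yields
\begin{equation*}
-\log \prob[\theta]{\xv_3 \mid \xv_1} \;\le\; -\sum_{\xv_2} \prob[\theta]{\xv_2 \mid \xv_1}\,\log \prob[\theta]{\xv_3 \mid \xv_2},
\end{equation*}
turning the outer $\log\sum$ into a weighted $\sum\log$ that is directly amenable to reweighting against the true joint distribution.

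Next I would take expectation over $(\xv_1, \xv_3) \sim \prob{\xv_1, \xv_3}$ on both sides, multiply and divide by the ground-truth conditional $\prob{\xv_2 \mid \xv_1, \xv_3}$, and regroup the resulting triple sum via the tower property so that the inner conditional expectation $\ep[\xv_1 \mid \xv_2, \xv_3]{\prob[\theta]{\xv_2 \mid \xv_1}/\prob{\xv_2 \mid \xv_1, \xv_3}}$ appears in precisely the form bounded by Assumption~\ref{asm:source-pivot-model-quality}. Because $-\log\prob[\theta]{\xv_3\mid\xv_2}\geq 0$ everywhere and the inner ratio expectation is a nonnegative quantity bounded above by $C$, replacing it with $C$ yields the upper bound $C \cdot \ep[\xv_2,\xv_3]{-\log\prob[\theta]{\xv_3 \mid \xv_2}}$. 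The supervised hypothesis of Theorem~\ref{thm:agreement-consistency} then bounds this by $C\varepsilon$. Setting $\kappa(\varepsilon) := C\varepsilon$ finishes the argument, since clearly $\kappa(\varepsilon)\to 0$ as $\varepsilon\to 0$.

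The main obstacle is keeping the direction of all the inequalities straight through the change-of-measure step: $\log\prob[\theta]{\xv_3\mid\xv_2}$ is nonpositive while the ratio expectation is a nonnegative quantity upper-bounded by $C$, so multiplying by the bound actually decreases the product, and one must negate \emph{before} invoking the $C$ bound for it to flow in the intended direction. A secondary subtlety is that we never directly use the $L_1 \to L_2$ supervised cross-entropy bound; its role is subsumed by Assumption~\ref{asm:source-pivot-model-quality}, which provides exactly the worst-case control over the source-pivot model that a marginal cross-entropy bound cannot supply. This matches the intuition stated in the paper's remark that pivoting requires the stronger guarantee that the source-pivot model does not concentrate mass on pathological translations to which the pivot-target model would react unpredictably.
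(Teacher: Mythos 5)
Your proof is correct and follows essentially the same route as the paper: Jensen's inequality with the source-pivot model $\prob[\theta]{\xv_2\mid\xv_1}$ as the mixing measure, a change of measure against the ground truth $\prob{\xv_2\mid\xv_1,\xv_3}$ to expose the likelihood ratio controlled by Assumption~\ref{asm:source-pivot-model-quality}, and then the supervised pivot-target cross-entropy bound to arrive at $\kappa(\varepsilon)=C\varepsilon$. Your sign bookkeeping is right, and your observation that the source-pivot cross-entropy bound never enters directly---its role is replaced by the stronger likelihood-ratio control in Assumption~\ref{asm:source-pivot-model-quality}---is exactly the point of the paper's accompanying remark. One small note in your favor: the ratio printed in the paper's intermediate line, $\prob[\theta]{\xv_k\mid\xv_j}/\prob{\xv_k\mid\xv_i,\xv_j}$ under $\ep[\xv_j\mid\xv_i,\xv_k]{\cdot}$, appears to carry over the index pattern from the proof of Theorem~\ref{thm:agreement-consistency} and is not the quantity that Assumption~\ref{asm:source-pivot-model-quality} actually bounds; the ratio you identify, $\prob[\theta]{\xv_2\mid\xv_1}/\prob{\xv_2\mid\xv_1,\xv_3}$ under $\ep[\xv_1\mid\xv_2,\xv_3]{\cdot}$, is the one required, so your intermediate step is the cleaner (and correct) form of the argument.
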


\begin{proof}
We can bound the expected error on pivoting as follows (using Jensen's inequality and the conditions from our assumptions):
\begin{align*}
\MoveEqLeft \ep[\xv_i, \xv_k]{- \log \sum_{\xv_j} \prob[\theta]{\xv_j \mid \xv_i} \prob[\theta]{\xv_k \mid \xv_j}} \\
& \leq \ep[\xv_i, \xv_j, \xv_k]{- \prob[\theta]{\xv_j \mid \xv_i} \log \prob[\theta]{\xv_k \mid \xv_j}} \\
& \leq \sum_{\xv_i, \xv_k} \ep[\xv_j \mid \xv_i, \xv_k]{\frac{\prob[\theta]{\xv_k \mid \xv_j}}{\prob{\xv_k \mid \xv_i, \xv_j}}} \times \\
& \qquad\quad\,\, \prob{\xv_i, \xv_k} \log \prob[\theta]{\xv_k \mid \xv_i} \\
& \leq C \varepsilon
\end{align*}
\end{proof}

\subsection{Details on the models and training}
\label{app:model-details}

\paragraph{Architecture.}
All our NMT models used the GNMT \citep{wu2016google} architecture with  Luong attention~\citep{luong2015effective}, 2 bidirectional encoder, and 4-layer decoder with residual connections.
All hidden layers (including embeddings) had 512 units.
Additionally, we used separate embeddings on the encoder and decoder sides as well as tied weights of the softmax that produced logits with the decoder-side (\ie, target) embeddings.
Standard dropout of 0.2 was used on all hidden layers.
Most of the other hyperparameters we set to default in the T2T \citep{vaswani2018tensor2tensor} library for the text2text type of problems.

\paragraph{Training and hyperparameters.}
We scaled agreement terms in the loss by $\gamma = 0.01$.
The training was done using Adafactor~\citep{shazeer2018adafactor} optimizer with 10,000 burn-in steps at 0.01 learning rate and further standard square root decay (with the default settings for the decay from the T2T library).
Additionally, implemented agreement loss as a subgraph as a loss was not computed if $\gamma$ was set to 0.
This allowed us to start training multilingual NMT models in the burn-in mode using the composite likelihood objective and then switch on agreement starting some point during optimization (typically, after the first 100K iterations; we also experimented with 0, 50K, 200K, but did not notice any difference in terms of final performance).
Since the agreement subgraph was not computed during the initial training phase, it tended to accelerate training of agreement models.

\subsection{Details on the datasets}
\label{app:data-details}

Statistics of the IWSLT17 and IWSLT17$^\star$ datasets are summarized in Table~\ref{tab:dataset-statistics}.
UNCorpus and and Europarl datasets were exactly as described by \citet{sestorain2018zero} and \citet{chen2017teacher,cheng2017joint}, respectively.

\begin{table}[t]
\centering
\scriptsize
\def\arraystretch{1.2}
\setlength{\tabcolsep}{7.3pt}
\begin{tabular}[t]{@{}l|rrrr@{}}
    \toprule
    \textbf{Corpus}             & \textbf{Directions}   & \textbf{Train}    & \textbf{Dev} (dev2010)  & \textbf{Test} (tst2010) \\
    \midrule
    \multirow{20}{*}{IWSLT17}   & $\De \rightarrow \En$ & 206k              &    888        & 1568 \\
                                & $\De \rightarrow \It$ & 205k              &    923        & 1567 \\
                                & $\De \rightarrow \Nl$ & 0                 &    1001       & 1567 \\
                                & $\De \rightarrow \Ro$ & 201k              &    912        & 1677 \\
    \cmidrule[0.5pt]{2-5}
                                & $\En \rightarrow \De$ & 206k              &    888        & 1568 \\
                                & $\En \rightarrow \It$ & 231K              &    929        & 1566 \\
                                & $\En \rightarrow \Nl$ & 237k              &    1003       & 1777 \\
                                & $\En \rightarrow \Ro$ & 220k              &    914        & 1678 \\
    \cmidrule[0.5pt]{2-5}
                                & $\It \rightarrow \De$ & 205k              &    923        & 1567 \\
                                & $\It \rightarrow \En$ & 231k              &    929        & 1566 \\
                                & $\It \rightarrow \Nl$ & 205k              &    1001       & 1669 \\
                                & $\It \rightarrow \Ro$ & 0                 &    914        & 1643 \\
    \cmidrule[0.5pt]{2-5}
                                & $\Nl \rightarrow \De$ & 0                 &    1001       & 1779 \\
                                & $\Nl \rightarrow \En$ & 237k              &    1003       & 1777 \\
                                & $\Nl \rightarrow \It$ & 233k              &    1001       & 1669 \\
                                & $\Nl \rightarrow \Ro$ & 206k              &    913        & 1680 \\
    \cmidrule[0.5pt]{2-5}
                                & $\Ro \rightarrow \De$ & 201k              &    912        & 1677 \\
                                & $\Ro \rightarrow \En$ & 220k              &    914        & 1678 \\
                                & $\Ro \rightarrow \It$ & 0                 &    914        & 1643 \\
                                & $\Ro \rightarrow \Nl$ & 206k              &    913        & 1680 \\
    \cmidrule[0.5pt]{1-5}
    \multirow{20}{*}{IWSLT17$^\star$}
                                & $\De \rightarrow \En$ & 124k              &    888        & 1568 \\
                                & $\De \rightarrow \It$ & 0                 &    923        & 1567 \\
                                & $\De \rightarrow \Nl$ & 0                 &    1001       & 1567 \\
                                & $\De \rightarrow \Ro$ & 0                 &    912        & 1677 \\
    \cmidrule[0.5pt]{2-5}
                                & $\En \rightarrow \De$ & 124k              &    888        & 1568 \\
                                & $\En \rightarrow \It$ & 139k              &    929        & 1566 \\
                                & $\En \rightarrow \Nl$ & 155k              &    1003       & 1777 \\
                                & $\En \rightarrow \Ro$ & 128k              &    914        & 1678 \\
    \cmidrule[0.5pt]{2-5}
                                & $\It \rightarrow \De$ & 0                 &    923        & 1567 \\
                                & $\It \rightarrow \En$ & 139k              &    929        & 1566 \\
                                & $\It \rightarrow \Nl$ & 0                 &    1001       & 1669 \\
                                & $\It \rightarrow \Ro$ & 0                 &    914        & 1643 \\
    \cmidrule[0.5pt]{2-5}
                                & $\Nl \rightarrow \De$ & 0                 &    1001       & 1779 \\
                                & $\Nl \rightarrow \En$ & 155k              &    1003       & 1777 \\
                                & $\Nl \rightarrow \It$ & 0                 &    1001       & 1669 \\
                                & $\Nl \rightarrow \Ro$ & 0                 &    913        & 1680 \\
    \cmidrule[0.5pt]{2-5}
                                & $\Ro \rightarrow \De$ & 0                 &    912        & 1677 \\
                                & $\Ro \rightarrow \En$ & 128k              &    914        & 1678 \\
                                & $\Ro \rightarrow \It$ & 0                 &    914        & 1643 \\
                                & $\Ro \rightarrow \Nl$ & 0                 &    913        & 1680 \\
    \bottomrule
\end{tabular}
\caption{Data statistics for IWSLT17 and IWSLT17$^\star$.
Note that training data in IWSLT17$^\star$ was restricted to only $\En \leftrightarrow \{\De, \It, \Nl, \Ro\}$ directions and cleaned from complete pivots through \En, which also reduced the number of parallel sentences in each supervised direction.}
\label{tab:dataset-statistics}
\vspace{56ex}
\end{table}

\end{document}